\newcommand{\OMIT}[1]{}
\newcommand{\term}[1]{\ensuremath{\texttt{#1}}\xspace}
\newcommand{\indicator}[1]{\mathbbm{1}\mleft[#1\mright]}
\newcommand{\initOneLiners}{%
 	\setlength{\itemsep}{0pt}
	\setlength{\parsep }{0pt}
  	\setlength{\topsep }{0pt}     	
}
\def \OPT {\term{Opt}}
\def \OGD {\term{OGD}}
\def \EXPSIX {\term{EXP3-SIX}}
\def \optadv {\OPT_{\term{Adv}}}
\def \optstoc {\OPT_{\term{Stoc}}}
\def \AlgP {\term{Alg}_\term P}
\def \AlgD {\term{Alg}_\term D}
\def \uP {u^\term P}
\def \uD {u^\term D}
\def \RD {R^\term D}
\def \RP {R^\term P}
\def \rhoadv {\rho_\term{Adv}}
\def \rhostoc {\rho_\term{Stoc}}
\def \Rew {\term{Rew}}
\newcommand{\xhdr}[1]{\vspace{2mm} \noindent{\bf #1}}
\newcommand{\ie}{{\em i.e.,~\xspace}}
\newcommand{\eg}{{\em e.g.,~\xspace}}
\newcommand{\Reals} {\ensuremath{\mathbb{R}}} 
\newcommand{\cA} {\ensuremath{\mathcal{A}}}
\newcommand{\cD} {\ensuremath{\mathcal{D}}}
\newcommand{\cF} {\ensuremath{\mathcal{F}}}
\newcommand{\cL} {\ensuremath{\mathcal{L}}}
\newcommand{\cO} {\ensuremath{\mathcal{O}}}
\NewDocumentCommand{\cTG}{o} {\IfNoValueTF {#1}{\ensuremath{\mathcal{T}_\texttt{G}}} {\ensuremath{\mathcal{T}_{\texttt{G},#1}}}
}
\newcommand{\cX} {\ensuremath{\mathcal{X}}}
\newcommand{\cZ} {\ensuremath{\mathcal{Z}}}
\newcommand{\commentsymbol}{\it\color{gray}$\triangleright$~}
\newcommand{\LComment}[1]{{\commentsymbol{#1}}}
\renewcommand{\vec}[1]{\bm{#1}}
\newcommand{\vx}{{\vec{x}}}
\newcommand{\vy}{\vec{y}}
\newcommand{\vc}{\vec{c}}
\newcommand{\vg}{\vec{g}}
\newcommand{\vzero}{\vec{0}}
\newcommand{\vlambda}{\vec{\lambda}}
\newcommand{\defeq}{\coloneqq}
\def \BIDDER {\term{B}}
\newcommand{\eR}{\EuScript{R}}
\newcommand{\E}{\mathbb{E}}
\newcommand{\cR}{\eR}
\NewDocumentCommand{\cRd}{O{}}{\ensuremath{\mathcal{A}^{\texttt{D}}_{#1}}}
\NewDocumentCommand{\observeutility}{O{\ell_t}}{\ensuremath{\textsc{ObserveUtility}(#1)}}
\NewDocumentCommand{\lossp}{O{t}}{\ensuremath{u^\texttt{P}_{#1}}}
\NewDocumentCommand{\lossd}{O{t}}{\ensuremath{u^\texttt{D}_{#1}}}
\NewDocumentCommand{\lossb}{O{t}}{\ensuremath{\ell^\BIDDER_{#1}}}
\NewDocumentCommand{\cump}{O{T}O{\texttt{P}}}{\ensuremath{{\cR}^{#2}_{#1}}}
\NewDocumentCommand{\uppp}{O{T}O{\texttt{P}}}{\ensuremath{{\EuScript{E}}^{#2}_{#1}}}
\NewDocumentCommand{\cumd}{O{T}}{\ensuremath{\cR^\texttt{D}_{#1}}}
\NewDocumentCommand{\cumdr}{O{T}}{\ensuremath{\EuScript{E}^{\texttt{D},\texttt{R}}_{#1}}}
\NewDocumentCommand{\cumdb}{O{T}}{\ensuremath{\EuScript{E}^{\texttt{D},\texttt{B}}_{#1}}}
\NewDocumentCommand{\cume}{O{T,\delta}O{}}{\ensuremath{\EuScript{E}_{#1}^{#2}}}
\NewDocumentCommand{\cumed}{O{T,\delta/d}O{}}{\ensuremath{\EuScript{E}_{#1}^{#2}}}
\NewDocumentCommand{\cumg}{O{T,\delta}}{\ensuremath{\EuScript{E}_{#1}}}
\NewDocumentCommand{\regp}{O{T}}{\ensuremath{R^\texttt{P}_{#1}}}
\NewDocumentCommand{\regd}{O{T}}{\ensuremath{R^\texttt{D}_{#1}}}
\newcommand{\err}{\term{Err}}
\newcommand{\range}[1]{{\llbracket {#1} \rrbracket}}
\definecolor{mygreen}{rgb}{0.0, 0.5, 0.0}
\definecolor{myorange}{rgb}{0.55, 0.62, 1}
\DeclareMathOperator*{\arginf}{arg\,inf}
\newcommand{\insta}{\term{A}}
\newcommand{\instb}{\term{B}}
\newcommand{\gaplambda}{\widetilde{\Delta}}
\newcommand{\bwk}{\term{BwK}}
\newcommand{\cbwlc}{\term{CBwLC}}
\newcommand{\lagrangebwk}{\term{LagrangeBwK}}
\newcommand{\cb}{\term{CB}}
\theoremstyle{plain}
\newtheorem{theorem}{Theorem}[section]
\newtheorem{example}[theorem]{Example}
\newtheorem{lemma}[theorem]{Lemma}
\newtheorem{claim}[theorem]{Claim}
\newtheorem{corollary}[theorem]{Corollary}
\theoremstyle{definition}
\newtheorem{definition}[theorem]{Definition}
\newtheorem{assumption}[theorem]{Assumption}
\newtheorem{remark}[theorem]{Remark}
\definecolor{niceRed}{RGB}{190,38,38}
\definecolor{Red2}{RGB}{219, 50, 54}
\definecolor{mgreen}{HTML}{9ECA8D}
\definecolor{blueGrotto}{HTML}{059DC0}
\definecolor{limeGreen}{HTML}{00CC00}
\definecolor{myellow}{rgb}{0.88,0.61,0.14}
\definecolor{navyBlueP}{HTML}{03468F}
\definecolor{Sepia}{HTML}{7F462C}
\definecolor{red2}{HTML}{CC0000}
\definecolor{orange2}{HTML}{FF8000}
\definecolor{mgray}{HTML}{ABB3B8}
\definecolor{myPurple}{RGB}{175,0,124}
\definecolor{royalBlue}{HTML}{057DCD}
\definecolor{mpink}{HTML}{FC6C85}
\title{No-Regret is not enough! Bandits with General Constraints through Adaptive Regret Minimization}
\author{
	Martino Bernasconi$^\dagger$ \quad
	Matteo Castiglioni$^\ddagger$ \quad
	Andrea Celli$^\dagger$\\
	$^\dagger$\ Bocconi university\\
	$^\ddagger$\ Politecnico di Milano\\
	{\textcolor{black}{\scriptsize\texttt{\{martino.bernasconi,andrea.celli2\}@unibocconi.it}, \quad \texttt{matteo.castiglioni@polimi.it}}}
}
\begin{document}
	
	\maketitle

\begin{abstract}
	In the bandits with knapsacks framework (\bwk) the learner has $m$ resource-consumption (\ie packing) constraints. We focus on the generalization of \bwk in which the learner has a set of general long-term constraints. The goal of the learner is to maximize their cumulative reward, while at the same time achieving small cumulative constraints violations.
	In this scenario, there exist simple instances where conventional methods for \bwk fail to yield sublinear violations of constraints.
	We show that it is possible to circumvent this issue by requiring the primal and dual algorithm to be \emph{weakly adaptive}. Indeed, even in absence on any information on the Slater's parameter $\rho$ characterizing the problem, the interplay between weakly adaptive primal and dual regret minimizers yields a ``self-bounding'' property of dual variables. In particular, their norm remains suitably upper bounded across the entire time horizon even without explicit projection steps. 
	By exploiting this property, we provide \emph{best-of-both-worlds} guarantees for stochastic and adversarial inputs. In the first case, we show that the algorithm guarantees sublinear regret. In the latter case, we establish a tight competitive ratio of $\rho/(1+\rho)$. In both settings, constraints violations are guaranteed to be sublinear in time.
	Finally, this results allow us to obtain new result for the problem of \emph{contextual bandits with linear constraints}, providing the first no-$\alpha$-regret guarantees for adversarial contexts.
\end{abstract}

\section{Introduction}

We consider a problem in which a decision maker tries to maximize their cumulative reward over a time horizon $T$, subject to a set of $m$ \emph{long-term constraints}. At each round $t$, the learner chooses $x_t\in\cX$ and, subsequently, observes a reward $f_t(x_t)\in [0,1]$ and $m$ constraint functions $\vg_t(x_t)\in[-1,1]^m$. Then, the problem becomes that of finding a sequence of decisions which guarantees a reward close to that of the best fixed decision in hindsight, while satisfying long-term constraints $\sum_{t=1}^T \vg_t(\vx_t)\le\mathbf{0}$ up to small sublinear violations.  
This framework subsumes the \emph{bandits with knapsacks} (\bwk) problem, where there are only resource-consumption constraints \cite{Badanidiyuru2018jacm,agrawal2019bandits,immorlica2022jacm}.



Inputs $(f_t,\vg_t)$ may be either stochastic or adversarial. The goal is designing algorithms providing guarantees for both input models, without prior knowledge of the specific environment they will encounter. Achieving this goal involves addressing two crucial challenges which prevent a direct application of primal-dual approaches based on the \lagrangebwk framework in \cite{immorlica2022jacm}.



\subsection{Technical Challenges}


In order to obtain meaningful regret guarantees, primal-dual frameworks based on \lagrangebwk need to control the magnitude of dual variables. This is necessary as dual variables appear in the loss function of the primal algorithm, and, therefore, influence the no-regret guarantees provided by the primal algorithm.  
In the context of knapsack constraints, this is usually achieved by exploiting the existence of a strictly feasible solution with Slater's parameter $\rho$, consisting of a \emph{void action} which yields zero reward and resource consumption. For instance, the frameworks of \cite{balseiro2022best,castiglioni2022online} guarantee boundedness of dual multipliers through an explicit projection step on the interval $[0,1/\rho]$.
However, in settings with general constraints beyond resource consumption, it is often unreasonable to assume that the learner knows the Slater's parameter $\rho$ a priori. The problem of operating without knowledge of $\rho$ has been already addressed in the stochastic setting \cite{agrawal2014bandits,agrawal2019bandits,yu2017online,wei2020online,castiglioni2022unifying}. For instance, a simple approach for the case of stochastic inputs involves adding an initial estimation phase to calculate an estimate of $\rho$, and subsequently treating this estimate as the true parameter \citep{castiglioni2022unifying}. However, these techniques cannot be applied in adversarial environments as estimates of $\rho$ based on the initial rounds could be inaccurate about future inputs.

Primal-dual templates based on \lagrangebwk usually operate under the assumption that the primal and dual algorithms have the no-regret property. In the case of standard \bwk, the no-regret requirement is sufficient to obtain optimal guarantees (see, \eg \cite{immorlica2022jacm,castiglioni2022online}). 
However, in our model, there exist simple instances in which the primal and dual algorithms satisfy the no-regret requirement, but the overall  framework fails to guarantee small constraints violations (see \Cref{sec:example no regret not enough}).
Moreover, known techniques to prevent this problem, such as introducing a \emph{recovery phase} to prevent excessive violations, crucially require a priori knowledge of the Slater's parameter $\rho$ \citep{castiglioni2022unifying}.

\subsection{Contributions}

Our approach is based on a generalization of the technique presented in \cite{castiglioni2023online} for online bidding under one budget and one return-on-investments constraint. The crux of the approach is requiring that both the primal and dual algorithms are \emph{weakly adaptive}, that is, they guarantee a regret upper bound of $o(T)$ for each sub-interval of the time horizon \cite{hazan2007adaptive}. We generalize this approach to the case of $m$ general constraints, thereby providing the first primal-dual framework for this problem that can operate without any knowledge of Slater's parameter in both stochastic and adversarial environments.

First, we prove a ``self-bounding'' lemma for the case of $m$ arbitrary constraints. It shows that, if the primal and dual algorithms are weakly adaptive, then boundedness of dual multipliers emerges as a byproduct of the interaction between the primal and dual algorithm. Thus, it is possible to guarantee a suitable upper bound on the dual multipliers even without any information on Slater's parameter.  

We use this result to prove \emph{best-of-both-worlds} no-regret guarantees for primal-dual frameworks derived from \lagrangebwk which employ weakly adaptive primal and dual algorithms. 
Our guarantees will be modular with respect to the regret guarantees of the primal and dual algorithms. In presence of a suitable primal regret minimizer, we show that our framework yields the following no-regret guarantees while attaining sublinear constraints violations: in the stochastic setting, it guarantees sublinear regret with respect to the best fixed randomized strategy that is feasible in expectation. Remarkably, this result is obtained without having to allocate the initial $T^{1/2}$ rounds for estimating the unknown parameter as in \cite{castiglioni2022unifying}.
In the adversarial setting, our framework guarantees a competitive ratio of $\rho/(1+\rho)$ against the best unconstrained strategy in hindsight. We provide a lower bound showing that this cannot be improved if constraint violations have to be $o(T)$.
This is the first regret guarantee for our problem in adversarial environments.   

Finally, we show that our model can be used to describe the \emph{contextual bandits with linear constraints} (\cbwlc) problem, which was recently studied by \cite{slivkins2023contextual, han2023optimal} in the context of stochastic and non-stationary environments. Our framework allows to extend these works in two directions: we establish the first no-$\alpha$-regret guarantees for \cbwlc when contexts are generated by an adversary, and we provide the first $\widetilde{O}(\sqrt{T})$ guarantees for the stochastic setting when the learner does not know an estimate of the Slater's parameter of the problem.

\section{Related Work}

\xhdr{Bandits with Knapsacks.} The (stochastic) \bwk problem was introduced an optimally solved by \cite{badanidiyuru2013bandits,Badanidiyuru2018jacm}. Other algorithms with optimal regret guarantees have been proposed by \cite{agrawal2014bandits,agrawal2019bandits}, whose approach is based on the paradigm of \emph{optimism in the face of uncertainty}, and in \cite{immorlica2019adversarial,immorlica2022jacm}. In the latter works, the authors propose the \lagrangebwk framework, which has a natural interpretation: arms can be thought of as primal variables, and resources as dual variables. The framework works by setting up a repeated two-player zero-sum game between a primal and a dual player, and by showing convergence to a Nash equilibrium of the expected Lagrangian game.  

\xhdr{Adversarial \bwk.} The adversarial \bwk problem was first introduced in \cite{immorlica2019adversarial,immorlica2022jacm}, where they studied the case in which the learner has $m$ knapsack constraints, and inputs are selected by an oblivious adversary. Their algorithm is based on a modified analysis of \lagrangebwk, and guarantees a $O(m\log T)$ competitive ratio. Subsequently, \cite{kesselheim2020online} provided a new analysis obtaining a $O(\log m\log T)$ competitive ratio, which is optimal. In the case in which budgets are $\Omega(T)$, \cite{castiglioni2022online} showed that it is possible to achieve a constant competitive ratio of $1/\rho$ where $\rho$ is the per-iteration budget. 

\xhdr{Beyond packing constraints.} \cite{castiglioni2022online} studies a setting with general constraints analogous to ours, and show how to adapt the \lagrangebwk framework to obtain best-of-both-worlds guarantees when Slater's parameter is known a priori. Similar guarantees are also provided, in the stochastic setting, by \cite{slivkins2023contextual}, which then extend the results to the \cbwlc model. Finally, the work of \cite{castiglioni2023online} introduces the use of weakly adaptive regret minimizers within the \lagrangebwk framework, and provides guarantees in the specific case of one budget constraint and one return-on-investments constraint.

\xhdr{Contextual bandits (\cb).} We briefly survey the most relevant works for our paper. Further references can be found in \cite[Chapter 8]{Slivkins2019intro}. As in \cite{SlivkinsSF23}, we focus on \cb with regression oracles \citep{foster2018practical,foster2020beyond,bietti2021contextual,simchi2022bypassing}.
The contextual version of \bwk was first studied by \cite{badanidiyuru2014resourceful} in the case of classification oracles. A regret-optimal and oracle-efficient algorithm for this problem was proposed by \cite{agrawal2016efficient} by exploiting the oracle-efficient algorithm for \cb by \cite{agarwal2014taming}. The first regression-based approach for constrained \bwk was proposed by \cite{agrawal2016linear} by exploiting the optimistic approach for linear \cb \cite{li2010contextual,chu2011contextual,abbasi2011improved}.
\cite{han2023optimal} propose a regression-based approach for a  constrained \bwk setup under stochastic inputs.
Finally, a notable special case of constrained \cb is online bidding under constraints \citep{balseiro2019learning,celli2023best,gaitonde2023budget,feng2023online,wang2023learning}.

\xhdr{Other related works.} 
\cite{fikioris2023approximately} show how to interpolate between the fully stochastic and the fully adversarial setting, depending on the magnitude of fluctuations in expected rewards and consumptions across rounds.
\cite{liu2022non} study a non-stationary setting and provide no-regret guarantees against the best dynamic policy through a UCB-based algorithm.
Some recent works explore the case in which resource consumptions in \bwk can be non-monotonic \citep{kumar2022non,bernasconi2023bandits,longVersion}.
Finally, a related line of works is the one on online allocation problems with fixed per-iteration budget, where the input pair of reward and costs is observed \emph{before} the learner makes a decision \citep{balseiro2022best,Balseiro2023}.

\section{Preliminaries} \label{sec:prelim}
There are $T$ rounds and $m$ constraints. We denote with $\cX\subset\Reals^K$ the decision space of the agent.
At each round $t\in\range{T}$, the agent selects an action $x_t\in\cX$ and subsequently observes a reward $f_t(x_t)$ and costs function $\vg_t(x_t)\in[-1,1]^m$, with $f_t:\cX\to[0,1]$ and $g_{t,i}:\cX\to[-1,1]$ for each $i\in\range{m}$.\footnote{In this work, for any $a,b\in\mathbb{N}$, with $a<b$ we denote with $\range{a}$ the set $\{1,\ldots, a\}$ while $\range{a,b}$ the set $\{a+1,\ldots, b\}$.} The reward and cost functions can either be chosen by an oblivious adversary or drawn from a distribution.
%
%
The goal of the decision maker is to maximize the cumulative reward $\Rew(T)\coloneqq\sum_{t\in\range{T}} f_t(x_t)$, while minimizing the cumulative violation $V_i(T)$ defined as 
\[\textstyle{
V_i(T)\coloneqq\sum_{t\in\range{T}} g_{t,i}(x_t)}
\]
for each constraint $i\in\range{m}$. We denote by $V(T)\defeq \max_{i\in\range{m}}V_i(T)$ the maximum cumulative violation across the $m$ constraints.

\subsection{Baselines}
We will provide best-of-both-worlds no-regret guarantees for our algorithm, meaning that it achieves optimal theoretical guarantees both in the stochastic and adversarial setting. In this section, we introduce the baselines used to define the regret in these two scenarios. 

\paragraph{Adversarial Setting} In the adversarial setting we employ the strongest baseline possible, \ie the best \emph{unconstrained} strategy in hindsight:
\[\textstyle
\optadv\coloneqq\sup_{x\in\cX}\sum_{t\in\range{T}}f_t(x).
\]
This baseline is more powerful than the best fixed strategy which is feasible on average \citep{immorlica2022jacm,castiglioni2022online}, which is the most common baseline in the literature. Our algorithm will yield an optimal competitive ratio against this stronger baseline. 
 In this setting, we define $\rhoadv$ as the feasibility parameter of the problem instance, \ie the largest reduction of cumulative violations that the agent is guaranteed to achieve by playing a ``safe'' strategy $\xi^\circ\in\Delta(\cX)$, where $\Delta(\cX)$ is the set of all probability measures on $\cX$. Formally, 
 \[
 \rhoadv:=-\max_{t\in\range{T}, i\in\range{m}}\mathbb{E}_{x\sim\xi^\circ}[g_{t,i}(x)] \quad\text{and}\quad
\xi^\circ:=\arginf_{\xi\in\Delta(\cX)}\max_{t\in\range{T}, i\in\range{m}}\mathbb{E}_{x\sim\xi}[g_{t,i}(x)].\] 
\paragraph{Stochastic Setting} 
When the reward and the costs are stochastic we denote by $\bar f$ and $\bar \vg$ the mean of $f_t$ and $\vg_t$, respectively. In particular, we have that the rewards are drawn so that $\mathbb{E}_{\term {Env}}[f_t(x)]=\bar f(x)$ (and similarly for the costs), where $\mathbb{E}_{\term {Env}}$ denotes expectation over the environment measure.
We define the baseline for the stochastic setting as the best fixed \emph{randomized} strategy that satisfies the constraints in expectation, which is the standard choice in Stochastic Bandits with Knapsacks settings \cite{badanidiyuru2013bandits,immorlica2022jacm}.
Formally, 
\[
\optstoc\coloneqq\sup\limits_{\xi\in\Delta(\cX):\, \mathbb{E}_{x\sim\xi}[\bar\vg(x)]\le \vzero} \mathbb{E}_{x\sim \xi}[\bar f(x)].
\]
Similarly to the adversarial case, we define the feasibility parameter $\rhostoc$ as the ``most negative'' cost achievable by randomized strategies \emph{in expectation}:
\[
\rhostoc:= -\inf\limits_{\xi\in\Delta(\cX)}\max_{ i\in\range{m}}\mathbb{E}_{x\sim\xi} [\bar g_{i}(x)].
\]
As it is customary in relevant literature (see, \eg \cite{immorlica2022jacm,castiglioni2022online,castiglioni2022unifying}), we make the following natural assumption about the existence of a strictly feasible solution.  Note that we do not make any assumption on the variance of the samples $(f_t,\vg_t)$ as we assume that they have bounded support, \ie with probability holds that $f_t(x)\in[0,1]$ and $g_{t,i}(x)\in[-1,1]$ for all $x\in\cX$ and $i\in\range{m}$.
\begin{assumption}
	In the adversarial setting, the sequence of inputs $(f_t,\vg_t)_{t=1}^T$ is such that $\rhoadv>0$. In the stochastic setting, the environment \term{Env} is such that $\rhostoc>0$. 
\end{assumption}

\begin{remark}
	We will describe a best-of-both-worlds type algorithm, that attains optimal guarantees both under stochastic and adversarial inputs, without knowledge of the specific setting in which the algorithm operates. It should be noted that $\rhoadv$ and $\rhostoc$ are \emph{not} known by the algorithm. 
	While the algorithm could potentially efficiently estimate $\rhostoc$ in stochastic settings, as shown in \cite{castiglioni2022unifying}, acquiring knowledge of $\rhoadv$ in the adversarial setting would necessitate information about future inputs. This requirement is generally unfeasible for most instances of interest.
\end{remark}

\section{
On Best-Of-Both-Worlds Guarantees}\label{sec:best of both worlds}

We employ the expression \emph{best-of-both-worlds} as defined in \cite{balseiro2022best} for the case of online allocation problems with resource-consumption constraints. In this context, we expect different types of guarantees depending on the input model being considered.

When inputs are stochastic, a best-of-both-worlds algorithm should guarantee that, given failure probability $\delta>0$, with probability at least $1-\delta$  
\[
\max(\optstoc-\Rew(T), V(T))=\widetilde O(\sqrt{T}).
\]
The dependency on $T$ is optimal since, in the worst case, it is optimal even without constraints \cite{auer2002nonstochastic}.

In adversarial settings, a best-of-both-worlds algorithm should guarantee that, with probability at least $1-\delta$, 
\[
\max\left(\optadv-\alpha \Rew(T), V(T)\right)=\widetilde O(\sqrt{T}),
\]
where $\alpha>1$ is the \emph{competitive ratio}.  
In the \bwk scenario with only resource-consumption constraints, the optimal competitive ratio attainable is $\alpha=1/\rhoadv$. In that setting, $\rhoadv$ denotes the per-iteration budget, which we can assume is equal for each resource without loss of generality.
In our set-up, considering arbitrary and potentially negative constraints, we will present an algorithm for which the above holds for $\alpha:=1+1/\rhoadv$. 
The following result shows that this competitive ratio is optimal. In particular, we show that it is not possible to obtain cumulative constraint violations of order $o(T)$ and competitive ratio strictly less that $1+1/\rhoadv$ (omitted proofs can be found in the Appendix).

\begin{restatable}{theorem}{theoremtightCR}[Lower bound adversarial setting]\label{prop:tighcompration}
	Consider the family of all adversarial instances with $\cX=\{a_1, a_2\}$, each characterized by a parameter $\rhoadv$ and optimal reward $\optadv$. Then, no algorithm can achieve, on all instances, sublinear cumulative violations $\E[V(T)]=o(T)$ and
	\[\frac{\optadv}{\E[\Rew]}>1+\frac{1}{\rhoadv}\]
\end{restatable}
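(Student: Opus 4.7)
The plan is to exhibit, for each admissible $\rho \in (0, 1]$, a single stationary (hence oblivious-adversarial) two-action instance on which any algorithm attaining $\E[V(T)]=o(T)$ is forced into the claimed competitive ratio. I fix $\cX=\{a_1,a_2\}$ with $m=1$ constraint and, at every round $t\in\range{T}$, set
\[
(f_t(a_1),g_t(a_1))=(1,1) \quad\text{and}\quad (f_t(a_2),g_t(a_2))=(0,-\rho).
\]
Since the instance is stationary, the unconstrained optimum plays $a_1$ at every round so $\optadv=T$, and a direct calculation gives $\arginf_{\xi\in\Delta(\cX)}\max_t\E_{x\sim\xi}[g_t(x)]=\delta_{a_2}$, hence $\rhoadv=\rho$. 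Scaling $f_t(a_1)\leftarrow r\in(0,1]$ produces instances with $\optadv=rT$ while preserving the same $\rhoadv$, so the full family indexed by $(\rhoadv,\optadv)$ is covered.

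The core of the argument is a linear trade-off between reward and violation. For any (possibly randomized) bandit algorithm, I let $N_1$ and $N_2=T-N_1$ denote the random numbers of plays of $a_1$ and $a_2$. Then deterministically $\Rew(T)=N_1$ and $V(T)=N_1-\rho N_2$. Imposing $\E[V(T)]\le\varepsilon(T)=o(T)$ and taking expectations yields $\E[N_1]\le (\rho T+\varepsilon(T))/(1+\rho)$, so
\[
\frac{\optadv}{\E[\Rew]}\ge \frac{T(1+\rho)}{\rho T+\varepsilon(T)}= 1+\frac{1}{\rhoadv}-o(1).
\]
For $T$ large enough this contradicts any uniform competitive-ratio guarantee strictly better than $1+1/\rhoadv$, yielding the lower bound. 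Notice that bandit feedback plays no role here: the trade-off is imposed by the cost structure itself, independently of what the algorithm can or cannot learn about the sequence.

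The proof is essentially an arithmetic manipulation once the instance is fixed, so no genuine technical obstacle arises. The only subtlety is reading the strict inequality in the statement: I interpret it as ``for every $\eta>0$, no algorithm can simultaneously guarantee $\E[V(T)]=o(T)$ and $\optadv/\E[\Rew]\le 1+1/\rhoadv-\eta$ on every instance of the family,'' which precisely matches the $-o(1)$ slack coming from $\varepsilon(T)/T \to 0$. Equivalently, the conclusion can be phrased as $\liminf_{T\to\infty}\optadv/\E[\Rew]\ge 1+1/\rhoadv$ on the constructed instance. As a sanity check, the requirement $g_{t,i}\in[-1,1]$ forces $\rhoadv\le 1$, so the construction above already spans the entire admissible range of Slater's parameters and no further case analysis is needed.
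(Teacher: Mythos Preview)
Your proof is correct and substantially simpler than the paper's. The paper constructs \emph{two} non-stationary instances $\insta$ and $\instb$ that coincide on the first $T/2$ rounds, and uses indistinguishability: any algorithm must play identically on both during that window, and the feasibility constraint on $\instb$ upper-bounds the number of plays of $a_1$ there, which in turn caps the reward on $\insta$ where $\optadv^{\insta}=T/2$. Crucially, in $\insta$ alone the action $a_1$ is both optimal and exactly feasible (its total cost is zero), so a second instance is genuinely needed for their argument to go through.

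Your single stationary instance sidesteps all of this by hard-wiring the reward--violation trade-off into the cost structure: $V(T)=(1+\rho)N_1-\rho T$ holds pointwise, so the bound follows from arithmetic with no appeal to what the algorithm can observe. This buys you a shorter proof and a slightly stronger conclusion, since the obstruction persists even under full-information feedback. The paper's route, on the other hand, illustrates that the lower bound already bites on instances where the unconstrained optimum is itself feasible, which is a mildly sharper qualitative statement about the baseline; but for the theorem as written your argument is entirely sufficient.
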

%

\section{Lagrangian Framework}\label{sec:Lagrangian}
\begin{wrapfigure}[15]{r}{0.5\textwidth}
	\vspace{-0.65cm}
\begin{minipage}{0.5\textwidth}
\begin{algorithm}[H]
	\caption{Primal-Dual Algorithm}
	\label{alg:alg1}
	\begin{algorithmic}[1]
		\STATE {\bfseries Input:} $\AlgP$ and $\AlgD$.
		\FOR{$t = 1, 2, \ldots , T$}
		\STATE{\bfseries Primal decision:} $x_t \gets \AlgP$		
		\STATE {\bfseries Dual decision:} $\vlambda_t \gets \AlgD$
		\STATE {\bfseries Observe:} $f_t(x_t)$ and $\vg_t(x_t)$
		\STATE {\bfseries Primal update:} feed $\uP_t(x_t)$ to $\AlgP$, where
		\begin{ALC@g}
			\STATE $\uP_t(x_t)\gets  f_t(x_t)  -   \langle \vlambda_t, \vg_t(x_t)\rangle$ 
		\end{ALC@g}
		\STATE {\bfseries Dual update:} 
		\begin{ALC@g}
			\item Feed $\uD_t:\vlambda\mapsto- f_t(x_t)+\langle\vlambda,\vc_t(x_t)\rangle$ to $\AlgD$
		\end{ALC@g}
		\ENDFOR
	\end{algorithmic}
\end{algorithm}
\end{minipage}
\end{wrapfigure}

%
Given the reward function $f:\cX\to[0,1]$ and the costs functions $\vg:\cX\to[-1,1]^m$ we define the Lagrangian $\cL_{f,\vg}:\cX\times\Reals^m_+\to\Reals$ as:
\[
\cL_{f,\vg}(x, \vlambda)\coloneqq f(x) - \langle\vlambda, \vg(x)\rangle.
\]
We will consider a modular primal-dual approach that employs a \emph{primal} algorithm $\AlgP$, producing primal decisions $x_t$, and a \emph{dual} algorithm $\AlgD$ that produces dual decisions $\vlambda_t$ for all $t$. We assume that $\AlgP$ and $\AlgD$ produce their decisions in order to maximize their utilities $\uP_t$ and $\uD_t$, respectively. We define $\uP_t:x\mapsto\cL_{f_t,\vg_t}(x, \vlambda_t)$ and $\uD_t:\vlambda\mapsto-\cL_{f_t,\vg_t}(x_t,\vlambda)$. 
The regret of the primal algorithm $\AlgP$ on any subset $I\subseteq\range{T}$ is defined as:
\[
\RP_I( \cX):=\sup_{x\in\cX}\sum\limits_{t\in I} [\uP_t(x)-\uP_t(x_t)].
\]
The regret of the dual algorithm $\AlgD$ is defined similarly for any bounded subset $\cD\subseteq\mathbb{R}_+$:
\[\textstyle
\RD_I(\cD):=\sup_{\vlambda\in\cD}\sum_{t\in I} [\uD_t(\vlambda)-\uD_t(\vlambda_t)].
\]
For ease of notation we write $\RP_T(\cX)$ and $\RD_T(\cD)$ when $I=\range{T}$, instead of $\RP_{\range{T}}(\cX)$ and $\RD_{\range{T}}(\cD)$.

The interaction of $\AlgP$ and $\AlgD$ with the environment is reported in \Cref{alg:alg1}. Note that the feedback of $\AlgP$ is forced to be bandit by the fact that we do not have counterfactual information of $f_t$ and $\vg_t$, however $\AlgD$ receives full feedback by design.

\begin{figure*}[!t]
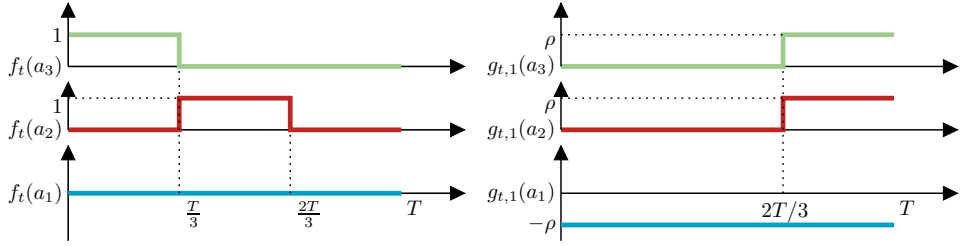

	\begin{subfigure}[t]{.45\textwidth}
		\centering\scalebox{.8}{\tikzset{every picture/.style={line width=0.75pt}} 

\input{colors}

\begin{tikzpicture}[x=0.75pt,y=0.75pt,yscale=-1,xscale=1]
	
	\draw    (130,190) -- (130,133) ;
	\draw [shift={(130,130)}, rotate = 90] [fill={rgb, 255:red, 0; green, 0; blue, 0 }  ][line width=0.08]  [draw opacity=0] (8.93,-4.29) -- (0,0) -- (8.93,4.29) -- cycle    ;
	\draw    (130,160) -- (377,160) ;
	\draw [shift={(380,160)}, rotate = 180] [fill={rgb, 255:red, 0; green, 0; blue, 0 }  ][line width=0.08]  [draw opacity=0] (8.93,-4.29) -- (0,0) -- (8.93,4.29) -- cycle    ;
	\draw    (130,120) -- (377,120) ;
	\draw [shift={(380,120)}, rotate = 180] [fill={rgb, 255:red, 0; green, 0; blue, 0 }  ][line width=0.08]  [draw opacity=0] (8.93,-4.29) -- (0,0) -- (8.93,4.29) -- cycle    ;
	\draw    (130,80) -- (377,80) ;
	\draw [shift={(380,80)}, rotate = 180] [fill={rgb, 255:red, 0; green, 0; blue, 0 }  ][line width=0.08]  [draw opacity=0] (8.93,-4.29) -- (0,0) -- (8.93,4.29) -- cycle    ;
	\draw [color=mgreen ,draw opacity=1 ][line width=2.25]    (130,60) -- (200,60) -- (200,80) -- (340,80) ;
	\draw [color=niceRed  ,draw opacity=1 ][line width=2.25]    (130,120) -- (200,120) -- (200,100) -- (270,100) -- (270,120) -- (340,120) ;
	\draw [color=blueGrotto ,draw opacity=1 ][line width=2.25]    (130,160) -- (200,160) -- (270,160) -- (340,160) ;
	\draw    (130,120) -- (130,93) ;
	\draw [shift={(130,90)}, rotate = 90] [fill={rgb, 255:red, 0; green, 0; blue, 0 }  ][line width=0.08]  [draw opacity=0] (8.93,-4.29) -- (0,0) -- (8.93,4.29) -- cycle    ;
	\draw    (130,80) -- (130,43) ;
	\draw [shift={(130,40)}, rotate = 90] [fill={rgb, 255:red, 0; green, 0; blue, 0 }  ][line width=0.08]  [draw opacity=0] (8.93,-4.29) -- (0,0) -- (8.93,4.29) -- cycle    ;
	\draw  [dash pattern={on 0.84pt off 2.51pt}]  (130,100) -- (200,100) ;
	\draw  [dash pattern={on 0.84pt off 2.51pt}]  (200,160) -- (200,80) ;
	\draw  [dash pattern={on 0.84pt off 2.51pt}]  (270,160) -- (270,120) ;
	
	\draw (342,163.4) node [anchor=north west][inner sep=0.75pt]    {$T$};
	\draw (202,163.4) node [anchor=north west][inner sep=0.75pt]    {$\frac{T}{3}$};
	\draw (272,163.4) node [anchor=north west][inner sep=0.75pt]    {$\frac{2T}{3}$};
	\draw (128,160) node [anchor=east] [inner sep=0.75pt]    {$f_t( a_{1})$};
	\draw (128,120) node [anchor=east] [inner sep=0.75pt]    {$f_t( a_{2})$};
	\draw (128,80) node [anchor=east] [inner sep=0.75pt]    {$f_t( a_{3})$};
	\draw (128,60) node [anchor=east] [inner sep=0.75pt]    {$1$};
	\draw (128,105) node [anchor=east] [inner sep=0.75pt]    {$1$};

\end{tikzpicture}}
	\end{subfigure}
	\begin{subfigure}[t]{.45\textwidth}
		\centering\scalebox{.8}{\tikzset{every picture/.style={line width=0.7pt}} 

\input{colors}

\begin{tikzpicture}[x=0.75pt,y=0.75pt,yscale=-1,xscale=1]
	
	\draw    (130,190) -- (130,133) ;
	\draw [shift={(130,130)}, rotate = 90] [fill={rgb, 255:red, 0; green, 0; blue, 0 }  ][line width=0.08]  [draw opacity=0] (8.93,-4.29) -- (0,0) -- (8.93,4.29) -- cycle    ;
	\draw    (130,160) -- (377,160) ;
	\draw [shift={(380,160)}, rotate = 180] [fill={rgb, 255:red, 0; green, 0; blue, 0 }  ][line width=0.08]  [draw opacity=0] (8.93,-4.29) -- (0,0) -- (8.93,4.29) -- cycle    ;
	\draw    (130,120) -- (377,120) ;
	\draw [shift={(380,120)}, rotate = 180] [fill={rgb, 255:red, 0; green, 0; blue, 0 }  ][line width=0.08]  [draw opacity=0] (8.93,-4.29) -- (0,0) -- (8.93,4.29) -- cycle    ;
	\draw    (130,80) -- (377,80) ;
	\draw [shift={(380,80)}, rotate = 180] [fill={rgb, 255:red, 0; green, 0; blue, 0 }  ][line width=0.08]  [draw opacity=0] (8.93,-4.29) -- (0,0) -- (8.93,4.29) -- cycle    ;
	\draw [color=mgreen ,draw opacity=1 ][line width=2.25]    (130,80) -- (270,80) -- (270,60) -- (340,60) ;
	\draw [color=niceRed  ,draw opacity=1 ][line width=2.25]    (130,120) -- (270,120) -- (270,100) -- (340,100) ;
	\draw [color=blueGrotto  ,draw opacity=1 ][line width=2.25]    (130,180) -- (200,180) -- (270,180) -- (340,180) ;
	\draw    (130,120) -- (130,93) ;
	\draw [shift={(130,90)}, rotate = 90] [fill={rgb, 255:red, 0; green, 0; blue, 0 }  ][line width=0.08]  [draw opacity=0] (8.93,-4.29) -- (0,0) -- (8.93,4.29) -- cycle    ;
	\draw    (130,80) -- (130,43) ;
	\draw [shift={(130,40)}, rotate = 90] [fill={rgb, 255:red, 0; green, 0; blue, 0 }  ][line width=0.08]  [draw opacity=0] (8.93,-4.29) -- (0,0) -- (8.93,4.29) -- cycle    ;
	\draw  [dash pattern={on 0.84pt off 2.51pt}]  (130,60) -- (270,60) ;
	\draw  [dash pattern={on 0.84pt off 2.51pt}]  (130,100) -- (270,100) ;
	\draw  [dash pattern={on 0.84pt off 2.51pt}]  (270,160) -- (270,80) ;
	
	\draw (342,163.4) node [anchor=north west][inner sep=0.75pt]    {$T$};
	\draw (255,163.4) node [anchor=north west][inner sep=0.75pt]    {$2T/3$};
	\draw (128,160) node [anchor=east] [inner sep=0.75pt]    {$g_{t,1}( a_{1})$};
	\draw (128,120) node [anchor=east] [inner sep=0.75pt]    {$g_{t,1}( a_{2})$};
	\draw (128,80) node [anchor=east] [inner sep=0.75pt]    {$g_{t,1}( a_{3})$};
	\draw (128,180) node [anchor=east] [inner sep=0.75pt]    {$-\rho $};
	\draw (128,105) node [anchor=east] [inner sep=0.75pt]    {$\rho $};
	\draw (128,65) node [anchor=east] [inner sep=0.75pt]    {$\rho $};

\end{tikzpicture}}
	\end{subfigure}
	\caption{Reward and costs of each arm of the instance employed in \Cref{ex:instanceone}.}
	\label{fig:instanceone}
\end{figure*}

\begin{remark}[The Challenges of the Adversarial Setting]
In the stochastic setting, it is not required adaptive regret minimization, see \eg \cite{slivkins2023contextual}, as it is possible to analyze directly the expected zero-sum game between $\AlgP$ and $\AlgD$. However, in the adversarial setting, the algorithms $\AlgP$ and $\AlgD$ face a different zero-sum game at each time $t$. Indeed, since $f_t$ and $g_t$ are adversarial, the zero-sum game with payoffs $\cL_{f_t,\vg_t}(\cdot,\cdot)$ is only seen at time $t$. This is in contrast to what happens in the stochastic setting in which the zero-sum game $\cL_{\bar f,\bar \vg}(\cdot,\cdot)$ at each time $t$ is the same for all time $t$.
\end{remark}

\subsection{No-Regret is Not Enough!}\label{sec:example no regret not enough}

Typically, Lagrangian frameworks for constrained bandit problems are solved by instantiating $\AlgP$ and $\AlgD$ with two regret minimizers, which are algorithms guaranteeing $\RP_{T}(\cX),\RD_{T}(\cD)=o(T)$, respectively \cite{immorlica2022jacm,castiglioni2022online}. The dual regret minimizer is usually instantiated with $\cD\defeq [0,M]^m$, for some constant $M>0$. Ensuring that $\cD$ is bounded is crucial to control the magnitude of primal utilities $\uP_t(\cdot)$, whose scale influences the magnitude of the primal regret.
In the following example, we show that the simple no-regret property alone of $\AlgP$ and $\AlgD$ is not sufficient in our setting.  
\begin{example}\label{ex:instanceone}
	We have one constraint, \ie $m=1$ and the set $\cX=\{a_1,a_2,a_3\}$ is a discrete set of $3$ actions. 
	The rewards of $a_1$ is always $0$, \ie $f_t(a_1)=0$ for all $t\in\range{T}$, while its cost is always $-\rho$, \ie $g_{t,1}(a_1)=-\rho$ for all $t\in t$.
	The rewards for $a_2$ and $a_3$ are defined as follows: for $t\in\range{T/3}$ we have $f_t(a_2)=0$ while $f_t(a_3)=1$. On the other hand, for $t\in\range{T/3, 2T/3}$ we have $f_t(a_2)=1$ while $f_t(a_3)=0$. Finally $f_t(a_2)=f_t(a_3)=0$ for all $t\in\range{2T/3,T}$.
	The costs for $a_2$ and $a_3$ are defined as follows: for $t\in\range{2T/3}$ we have $g_{t,1}(a_2)=g_{t,1}(a_3)=0$, while $g_{t,1}(a_2)=g_{t,1}(a_3)=1$ for all $t\in\range{2T/3,T}$.	
	The instance is depicted in \Cref{fig:instanceone}.
\end{example}

\begin{restatable}{proposition}{propRegretNotEnought}\label{prop:RegretNotEnought}
	Consider the instance of \Cref{ex:instanceone}. Even if $\AlgP$ and $\AlgD$ suffer regret less than or equal then zero, the primal-dual framework fails to achieve sublinear constraint violations.
\end{restatable}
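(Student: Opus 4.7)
The plan is to exhibit explicit strategies for $\AlgP$ and $\AlgD$ on the instance of \Cref{ex:instanceone} such that both enjoy regret at most zero while the cumulative violation grows linearly in $T$. I would take $\AlgP$ to be the ``greedy'' primal that plays the reward-maximizing arm in each third of the horizon: $x_t = a_3$ on $\range{T/3}$, $x_t = a_2$ on $\range{T/3, 2T/3}$, and $x_t = a_3$ on $\range{2T/3, T}$, so that $\AlgP$ ignores that $a_3$ now carries cost $+1$ in the final third. For $\AlgD$, I would set $\lambda_t = 0$ on $\range{2T/3}$ and $\lambda_t = M$ on $\range{2T/3, T}$, with $M \coloneqq 2/(1+\rho)$ assumed to lie in $\cD$. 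Both are admissible time-dependent algorithms.

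Then I would verify the two regret bounds by direct summation. Since $\lambda_t g_t(x_t) = 0$ outside the last third, $\AlgP$'s cumulative Lagrangian utility equals $T/3 + T/3 - M \cdot T/3 = (2-M) T/3$. The best fixed arm in hindsight for $\AlgP$ is $a_1$, whose Lagrangian payoff is $\rho M T/3$ (the only nonzero contribution comes from the last third, where $\lambda_t = M$ and $g_t(a_1) = -\rho$); the algebraic identity $M(1+\rho) = 2$ makes the two quantities coincide and yields zero primal regret. For $\AlgD$, the per-round regret contribution $(\lambda - \lambda_t)\, g_t(x_t)$ vanishes outside the last third because $g_t(x_t) = 0$ there, and in the last third it equals $(\lambda - M) \cdot 1$, which is nonpositive for every $\lambda \in [0, M]$. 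Hence both regrets are at most zero, yet the cumulative violation is $\sum_{t} g_t(x_t) = T/3 = \Omega(T)$, completing the proof.

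I do not anticipate any serious technical obstacle, since the argument reduces to a handful of direct summations once the trajectories are fixed. The real subtlety — and the conceptual message the example carries — is that no-regret is a purely cumulative guarantee: the primal can ``bank'' surplus Lagrangian utility during the first two thirds by tracking the best arm, and then spend it by playing a Lagrangian-suboptimal action precisely on the sub-interval where the constraint becomes active, while the dual cannot penalize this behavior more severely than $\lambda = M$ in hindsight. Note that $\AlgP$'s regret restricted to $\range{2T/3, T}$ alone is $M(1+\rho)T/3 = 2T/3$, so the construction is ruled out by requiring \emph{weakly adaptive} regret minimization, which forces low regret on every sub-interval of $\range{T}$ rather than only on the whole horizon — this is precisely the strengthening imposed in the remainder of the paper.
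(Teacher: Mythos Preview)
Your proposal is correct and follows essentially the same approach as the paper: exhibit explicit primal and dual trajectories that yield zero regret over $\range{T}$ while incurring linear violation in the last third. The only cosmetic differences are that the paper plays $a_2$ rather than $a_3$ on $\range{2T/3,T}$ (the two arms are interchangeable there), and the paper chooses $M=1/\rho$ instead of your $M=2/(1+\rho)$ because its computations use the figure's version of the instance with last-phase cost $\rho$ rather than the text's cost $1$; both choices make the primal regret exactly zero under the respective reading.
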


Intuitively, the reason for which a standard primal-dual framework fails in \Cref{ex:instanceone} is that the primal regret minimizer can accumulate enough negative regret in the first two phases to ``absorb'' large regret suffered in the third phase. This ``laziness'' of $\AlgP$ allows it to play actions in the last phase for which it incurs  linear violations of the constraint. For more details see the proof of \Cref{prop:RegretNotEnought} in \Cref{app:A}.
One could solve the problem employing the \emph{recovery technique} proposed in \cite{castiglioni2022unifying}, which prescribes to minimize the violations at a prescribed time. However, selecting the right time to start the recovery phase crucially requires knowledge of the Slater's parameter, which is not available in our setting. 
The only approach which does not require knowledge of Slater's parameter is the one proposed in \cite{castiglioni2023online} for the case of \emph{return-on-investment} constraints, whose core idea we describe in the next section.

\subsection{No-Adaptive Regret}
The reason why generic regret minimizes fail to give satisfactory result on the instance described in \Cref{ex:instanceone} is that they fail to adapt to the changing environment, even if the regret of the primal is zero on the entire horizon $\range{T}$, it fails to ``adapt'' in the final rounds $\range{2T/3, T}$. Indeed, in these last rounds, if the primal algorithm's objective is guaranteeing sublinear regret over $\range{T}$, it is not required to updated its decision, since it accumulated large negative regret of $-2T/3$ regret in the initial rounds $\range{2T/3}$. Therefore, standard no-regret guarantees are not enough. 

A stronger requirement for the primal and dual algorithm is being \emph{weakly adaptive} \citep{hazan2007adaptive}, that is, guaranteeing that in high probability
\(
\sup_{I=\range{t_1,t_2}} R^{\term{P}, \term{D}}_I=o(T).
\)
Intuitively, this requirement would force $\AlgP$ to change its action during the last phase of \Cref{ex:instanceone}. 
This idea was first proposed in \cite{castiglioni2023online} for the specific case of a learner with one budget and one return-on-investments constraints. 
In the following section, we show how such approach can be extended to the case of general constraints.

\section{Self-Bounding Lemma}\label{sec:selfbound}


One crucial difference with the previous literature is that the feasibility parameter is not known a priori, and thus we cannot directly bound the range of the Lagrange multipliers as in \bwk. 
%
%
At a high level we want that, regardless of the choices of $f_t$ and $\vg_t$, the $\ell_1$ norm of the Lagrange multipliers is bounded by a quantity that depends on the (unknown) parameters of the instance. However, for this to hold we need that the primal algorithm $\AlgP$ is (almost) scale free, \ie that its regret scale quadratically in the unknown range of its reward function.\footnote{Usually we say that an algorithm is scale-free \citep{orabona2018scale} if its regret scales linearly in the (unknown) range of its rewards, \ie $1$-scale-free with our definition.}
Formally:
\begin{definition}
	For any $c\ge 1$, we say that $\AlgP$ is a $c$-scale-free and weakly-adaptive regret minimizer if, for any subset of rounds  $I=\range{t_1,t_2}\subseteq \range{T}$, with probability at least $1-\delta$ it holds that
	\[
	\RP_I(\cX)\le L^c\cdot \overline{\RP}_{T,\delta}(\cX),
	\]
	where the maximum module of the primal utilities is $\sup_{t\in\range{T},x\in \cX}|\uP_t(x)|\eqqcolon L$, and $\overline{\RP}_{T,\delta}(\cX)$ depends only on $T$, $\delta$ and $\cX$, and is non-decreasing in the length of the time horizon $T$.
\end{definition} 

Now, we show that \emph{online gradient descent} ($\OGD$) \citep{zinkevich2003online} with a carefully defined learning rate yields the required self-bounding property both in the stochastic and adversarial setting.
\begin{restatable}[Self-bounding lemma]{lemma}{lemmaselfbounded}\label{thm1}
	Let $\eta_\OGD:=\left({800\cdot m \cdot  \max \left\{    {\overline{\RP}_{T,\delta}(\cX)}, E_{T,\delta}\right\} }\right)^{-1} $, then if $\AlgD$ is $\OGD$ on the set $\cD=\mathbb{R}^m_{\ge 0}$, and the primal algorithm $\AlgP$ is $2$-scale-free and has a high-probability weakly adaptive regret bound $\overline{\RP}_{T,\delta}(\cX)$, then with probability at least $1-\delta$: 
	\[\textstyle
	\max_{t\in\range{T}}\|\vlambda_t\|_1\le\frac{13m}{\rho},
	\]
	where $\rho=\rhoadv$ or $\rho=\rhostoc$ depending on the setting and $E_{T,\delta}\coloneqq\sqrt{16T\log\left(\nicefrac{2T}{\delta}\right)}$.
\end{restatable}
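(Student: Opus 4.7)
The plan is to prove by contradiction a slightly stronger claim, $\|\vlambda_t\|_2\le 13\sqrt{m}/\rho$ for every $t\in\range{T}$, from which the stated $\ell_1$ bound follows via $\|\vlambda_t\|_1\le\sqrt{m}\,\|\vlambda_t\|_2$. Suppose by contradiction that a first time $\tau$ exists with $\|\vlambda_\tau\|_2>13\sqrt{m}/\rho$, and let $\sigma_0<\tau$ be the last index with $\|\vlambda_{\sigma_0}\|_2\le 12\sqrt{m}/\rho$ (it exists since $\vlambda_1=\vzero$). The analysis will be carried out on $I\coloneqq\{\sigma_0,\ldots,\tau-1\}$, on which $\|\vlambda_s\|_2>12\sqrt{m}/\rho$ by construction. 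The first ingredient is the standard $\OGD$ telescoping identity, with comparator $\vzero$ and $\|\vg_s(x_s)\|_2^2\le m$:
\[
\|\vlambda_\tau\|_2^2-\|\vlambda_{\sigma_0}\|_2^2\le 2\eta_\OGD\sum_{s\in I}\langle\vlambda_s,\vg_s(x_s)\rangle+\eta_\OGD^2\,m\,|I|,
\]
whose LHS strictly exceeds $(13^2-12^2)m/\rho^2=25\,m/\rho^2$ by the definitions of $\tau$ and $\sigma_0$.

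The second ingredient bounds the sum $\sum_{s\in I}\langle\vlambda_s,\vg_s(x_s)\rangle$ via the weakly adaptive primal regret benchmarked against the safe mixed strategy $\xi^\circ$: expanding $\uP_s$ and rearranging $\RP_I(\cX)\ge\sum_{s\in I}[\E_{x\sim\xi^\circ}\uP_s(x)-\uP_s(x_s)]$ gives
\[
\sum_{s\in I}\langle\vlambda_s,\vg_s(x_s)\rangle\le\sum_{s\in I}\bigl\langle\vlambda_s,\E_{x\sim\xi^\circ}[\vg_s(x)]\bigr\rangle+\RP_I(\cX)+|I|.
\]
By minimality of $\tau$, every $s<\tau$ satisfies $\|\vlambda_s\|_1\le 13m/\rho$, so $|\uP_s|\le 1+13m/\rho\le 14m/\rho$, and the $2$-scale-free assumption yields $\RP_I(\cX)\le(14m/\rho)^2\,\overline{\RP}_{T,\delta}(\cX)$. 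The third ingredient turns the $\E_{\xi^\circ}$-term into a multiple of $\|\vlambda_s\|_1$. In the adversarial case, $\E_{x\sim\xi^\circ}g_{s,j}(x)\le-\rhoadv$ is deterministic, hence $\langle\vlambda_s,\E_{x\sim\xi^\circ}\vg_s(x)\rangle\le-\rhoadv\|\vlambda_s\|_1$. In the stochastic case, the same inequality holds in expectation over the environment, and the martingale-difference sequence $X_s\coloneqq\langle\vlambda_s,\E_{x\sim\xi^\circ}\vg_s(x)-\bar{\vg}(\xi^\circ)\rangle$ has $|X_s|\le 2\|\vlambda_s\|_1\le 26m/\rho$; an Azuma--Hoeffding bound, with a union bound over the $O(T)$ candidate values of $\sigma_0$, yields a high-probability error of order $(m/\rho)\,E_{T,\delta}$.

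Combining the three bounds and using $\|\vlambda_s\|_1\ge\|\vlambda_s\|_2>12\sqrt{m}/\rho$ on $I$ gives, with probability at least $1-\delta$,
\[
\tfrac{25m}{\rho^2}<\eta_\OGD|I|\bigl(2-24\sqrt{m}+\eta_\OGD m\bigr)+2\eta_\OGD\,\tfrac{(14m)^2}{\rho^2}\,\overline{\RP}_{T,\delta}(\cX)+\tfrac{26m}{\rho}\,\eta_\OGD E_{T,\delta}.
\]
For $m\ge 1$ and $\eta_\OGD m\le 1$ the coefficient of $|I|$ is at most $-21\eta_\OGD<0$, so the $|I|$-term is non-positive and can be dropped. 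The prescribed $\eta_\OGD=\bigl(800\,m\max\{\overline{\RP}_{T,\delta}(\cX),E_{T,\delta}\}\bigr)^{-1}$ then makes the second summand at most $\approx 0.5\,m/\rho^2$ and the third $\approx 1/\rho$, both strictly below $25\,m/\rho^2$, producing the desired contradiction.

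The main obstacle is an apparent circularity: the $L^2$ factor in the scale-free primal-regret bound is driven by $L\sim\|\vlambda_s\|_1$, which is precisely what the lemma bounds. This is resolved by the minimality of $\tau$, which supplies the uniform bound $\|\vlambda_s\|_1\le 13m/\rho$ on $[1,\tau-1]$ and lets the contradiction argument close; the whole proof is therefore a bootstrap. A secondary subtlety is the martingale concentration in the stochastic setting: because the dual iterates depend on the entire history, the Azuma step must be invoked conditionally on the same high-probability event on which the $\ell_1$ bound already holds, which meshes cleanly with the induction-by-contradiction structure. A further, more cosmetic issue is that $\OGD$ is naturally an $\ell_2$ algorithm, forcing the bootstrap to be carried out in $\ell_2$ and converted back, which is the source of the factor $\sqrt{m}$ gap between $\|\vlambda\|_2$ and $\|\vlambda\|_1$ in the statement.
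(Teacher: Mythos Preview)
Your argument is correct and is a genuinely different route from the paper's. The paper works entirely in the $\ell_1$ norm: it defines thresholds $c_1/\rho$ and $c_2/\rho$, lower-bounds the interval length $t_2-t_1\ge (c_2-c_1)/(2\eta m\rho)$ via the per-step $\ell_1$ increment, and then proves a separate auxiliary claim showing $\sum_{t\in[t_1,t_2]}\langle\vlambda_t,\vg_t(x_t)\rangle\ge m/(2\rho^2\eta)$ through a rather involved per-coordinate, two-phase analysis of the projected OGD iterates. These two pieces are then combined with the primal-regret bound against the safe strategy to force a contradiction on $\eta$.

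You instead exploit that OGD is natively an $\ell_2$ method: the single telescoping inequality $\|\vlambda_\tau\|_2^2-\|\vlambda_{\sigma_0}\|_2^2\le 2\eta\sum\langle\vlambda_s,\vg_s(x_s)\rangle+\eta^2 m|I|$ replaces the paper's auxiliary claim entirely, and the primal-regret bound is plugged in \emph{directly} to control $\sum\langle\vlambda_s,\vg_s(x_s)\rangle$. The elegant point in your version is that the $|I|$-dependent term acquires a negative coefficient (because $-\rho\|\vlambda_s\|_1<-12\sqrt{m}$ dominates the $+1$ from $f$ and the $\eta m$ from the quadratic remainder) and can simply be dropped, so no lower bound on $|I|$ is needed at all. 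This makes the argument shorter and more transparent. The trade-off is the $\ell_2\to\ell_1$ conversion at the end, which is where your extra $\sqrt{m}$ is absorbed; the paper avoids this by staying in $\ell_1$ throughout at the cost of the per-coordinate claim. Both proofs share the same circularity-breaking bootstrap (minimality of the first crossing time bounds the scale $L$ on the relevant interval) and the same handling of the stochastic case via Azuma--Hoeffding against the safe strategy.
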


We remark that the self-bounding lemma shows that, if we take $\OGD$ with a carefully defined learning rate $\eta_\OGD=\widetilde O((m\max\{\overline{\RP}_{T,\delta}(\cX),\sqrt{T}\})^{-1})$ as $\AlgP$, then  the $\ell_1$-norm of the variables $\vlambda_t$ is automatically bounded by the reciprocal of the feasibility parameter, even if the feasibility parameter is unknown to the learner. This is the central result that allows us to build algorithms that work without knowing Slater's parameter. We observe that:

\begin{remark}Even in the simplest instances of bandit problems one has $\overline{\RP}_{T,\delta}(\cX)=\widetilde \Omega(\sqrt{T})$ and, therefore, we can assume that $\eta_\OGD=\widetilde O\left(({m\overline{\RP}_{T,\delta}(\cX)})^{-1}\right)$.
\end{remark}

\begin{remark}
We will work with $2$-scale-free algorithms, which suffice to obtain the desired guarantees for our framework. We observe that scale-free algorithms would yield a tighter bound of $1/\rho$ in the \Cref{th:advregret,th:stochregret} and a simpler analysis of \Cref{thm1}. However, scale-free algorithm are much more difficult to find and this would limit the extent to which our framework can be applied. On the other hand, $2$-scale-free algorithm seems to be more abundant (see, \eg \Cref{sec:applications}). Indeed, as we show in \Cref{sec:applications}, it is usually the case that setting the learning rate independent on the scale of the rewards provides $2$-scale-freeness. We leave such characterization to future research.
\end{remark}


\section{General Guarantees}\label{sec:general}

First, we exploit \Cref{thm1} to bound the total violations of the framework.

\begin{restatable}{theorem}{theoremviolations}\label{th:violations}
	Let $\AlgD$ be $\OGD$ with learning rate $\eta$ as in \Cref{thm1}, and let $\AlgP$ any $2$-scale-free algorithm with no-adaptive regret.
	Then, with probability at least $1-\delta$, it holds that \(V_T=\widetilde O\left(\frac{m^2}{\rho}\overline{\RP}_{T,\delta}(\cX)\right),\) where $\rho=\rhoadv$ in the adversarial setting and $\rho=\rhostoc$ in the stochastic.
\end{restatable}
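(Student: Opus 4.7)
My plan is to derive \Cref{th:violations} as a one-line consequence of the self-bounding lemma (\Cref{thm1}) combined with the standard coordinate-wise projection inequality for OGD. The key observation is that, since $\AlgD$ is OGD on $\cD = \Reals^m_{\ge 0}$ initialized at $\vlambda_1 = \vzero$, and since the gradient of $\uD_t(\vlambda) = -f_t(x_t) + \langle\vlambda,\vg_t(x_t)\rangle$ in $\vlambda$ is exactly $\vg_t(x_t)$, the coordinate-wise update satisfies
\[
\lambda_{t+1,j} \;=\; \max\!\bigl(0,\; \lambda_{t,j} + \eta\, g_{t,j}(x_t)\bigr) \;\ge\; \lambda_{t,j} + \eta\, g_{t,j}(x_t)
\]
for every $t$ and every $j\in\range{m}$, because the projection $\max(0,\cdot)$ can only push the iterate up relative to the linear step.

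Telescoping this inequality from $t=1$ to $t=T-1$ and using $\vlambda_1 = \vzero$ yields $\lambda_{T,j} \ge \eta\, V_j(T-1)$ for every $j$. The hypotheses of \Cref{thm1} hold by assumption, so with probability at least $1-\delta$ we have $\max_{t\in\range{T}}\|\vlambda_t\|_1 \le 13m/\rho$. Combining, for each $j\in\range{m}$,
\[
V_j(T-1) \;\le\; \frac{\lambda_{T,j}}{\eta} \;\le\; \frac{\|\vlambda_T\|_1}{\eta} \;\le\; \frac{13m}{\rho\,\eta},
\]
and since $|g_{T,j}(x_T)|\le 1$, the one missing round contributes at most $1$, so that $V(T) = \max_{j}V_j(T) \le 13m/(\rho\eta) + 1$.

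Plugging in the prescribed learning rate $\eta^{-1} = 800\,m\,\max\{\overline{\RP}_{T,\delta}(\cX),\, E_{T,\delta}\}$ from \Cref{thm1} and using that $E_{T,\delta} = \widetilde O(\sqrt T)$, together with the remark immediately after \Cref{thm1} that $\overline{\RP}_{T,\delta}(\cX) = \widetilde\Omega(\sqrt T)$ for any nontrivial bandit primal, gives the claimed bound
\[
V(T) \;=\; \widetilde O\!\left(\frac{m^2}{\rho}\,\overline{\RP}_{T,\delta}(\cX)\right).
\]

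The substance of the argument is entirely carried by \Cref{thm1}: once the dual iterates are known to have bounded $\ell_1$ norm, the violations bound is an immediate by-product of the one-sided inequality enforced by the OGD projection. I do not foresee any serious obstacle; the only technicality is the off-by-one between the index range in \Cref{thm1} (which covers $t\in\range{T}$) and the ideal iterate $\vlambda_{T+1}$, which is absorbed into the constant via the explicit one-round slack used above.
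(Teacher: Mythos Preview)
Your proof is correct and follows essentially the same route as the paper: the coordinate-wise inequality $\lambda_{t+1,i}\ge\lambda_{t,i}+\eta\,g_{t,i}(x_t)$ is telescoped, the resulting bound $V_i(T)\le\|\vlambda\|_1/\eta$ is combined with \Cref{thm1}, and the learning-rate expression is substituted. If anything, your treatment of the off-by-one between $\vlambda_T$ (covered by \Cref{thm1}) and $\vlambda_{T+1}$ is slightly more careful than the paper's own write-up.
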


Moreover, the proof of \Cref{th:violations} can be easily adapted to show that the violations of any constraint $i\in\range{m}$ is bounded on any interval $\range{t}$ with $t\in\range{T}$.

Now, we prove that the framework, with high probability, yields optimal guarantees in both stochastic and adversarial settings. We start with the adversarial setting, for which the following result holds.

\begin{restatable}{theorem}{advregret}\label{th:advregret}
	If $\AlgD$ is $\OGD$ with learning rate $\eta_\OGD$ and domain $\cD:=\mathbb{R}^m_{\ge 0}$, and $\AlgP$ is $2$-scale-free, then, in the adversarial setting, with high probability:
	\[
	\Rew\ge \frac{\rhoadv}{1+\rhoadv}\optadv- \widetilde O\left(\left(\frac m\rhoadv\right)^2\overline{\RP}_{T,\delta}(\cX)\right).
	\]
\end{restatable}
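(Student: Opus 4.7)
The plan is to run a fairly standard primal-dual analysis, but with three twists: (i) Lagrange multipliers are controlled \emph{a posteriori} via the self-bounding lemma rather than by design, (ii) the primal comparator is a carefully chosen mixture of the unconstrained optimum and the safe strategy $\xi^\circ$, and (iii) the dual-cost term $\sum_t \langle \vlambda_t, \vg_t(x_t)\rangle$ is bounded from below by instantiating the OGD regret inequality at the comparator $\vlambda=\vzero$. The first step is to invoke \Cref{thm1} to obtain $\|\vlambda_t\|_1\le 13m/\rhoadv$ with high probability. Combined with $f_t\in[0,1]$ and $\vg_t\in[-1,1]^m$, this bounds the magnitude of the primal utilities by $L=O(m/\rhoadv)$, and the $2$-scale-freeness of $\AlgP$ then gives $\RP_T(\cX)\le L^2\,\overline{\RP}_{T,\delta}(\cX)=\widetilde O((m/\rhoadv)^2\overline{\RP}_{T,\delta}(\cX))$.

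Next, let $x^\star\in\arg\sup_{x\in\cX}\sum_t f_t(x)$ and define the mixed comparator
\[
\xi\coloneqq \tfrac{\rhoadv}{1+\rhoadv}\,\delta_{x^\star}+\tfrac{1}{1+\rhoadv}\,\xi^\circ.
\]
Because $f_t\ge 0$ we get $\E_{x\sim\xi}[\sum_t f_t(x)]\ge \tfrac{\rhoadv}{1+\rhoadv}\optadv$, and componentwise
\[
\E_{x\sim\xi}[g_{t,i}(x)]\le \tfrac{\rhoadv}{1+\rhoadv}\cdot 1+\tfrac{1}{1+\rhoadv}(-\rhoadv)=0,
\]
i.e. the mixture is per-round feasible in expectation. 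Taking expectations in the primal regret inequality $\sum_t \uP_t(x_t)\ge \sum_t \uP_t(x)-\RP_T(\cX)$ against $x\sim\xi$, and using that $\vlambda_t\ge\vzero$ together with $\E_{x\sim\xi}[\vg_t(x)]\le\vzero$ (so $-\langle\vlambda_t,\E_{x\sim\xi}[\vg_t(x)]\rangle\ge 0$), yields
\[
\sum_t \uP_t(x_t)\;\ge\;\tfrac{\rhoadv}{1+\rhoadv}\optadv-\RP_T(\cX).
\]
Expanding $\sum_t \uP_t(x_t)=\Rew(T)-\sum_t\langle \vlambda_t,\vg_t(x_t)\rangle$ gives
\[
\Rew(T)\;\ge\;\tfrac{\rhoadv}{1+\rhoadv}\optadv-\RP_T(\cX)+\sum_t\langle\vlambda_t,\vg_t(x_t)\rangle.
\]

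To control the last term, I invoke the standard OGD regret bound on the unbounded domain $\mathbb{R}_{\ge 0}^m$ with comparator $\vlambda=\vzero$ (and initialization $\vlambda_1=\vzero$). Since $\nabla \uD_t=\vg_t(x_t)$ has squared norm at most $m$, one gets $-\sum_t\langle\vlambda_t,\vg_t(x_t)\rangle\le \eta_\OGD\,mT/2$. With the learning rate from \Cref{thm1}, $\eta_\OGD\,mT/2\le T/(1600\,E_{T,\delta})=\widetilde O(\sqrt{T})$, absorbed into the $\widetilde O(\cdot)$ provided $\overline{\RP}_{T,\delta}(\cX)=\widetilde\Omega(\sqrt{T})$, as remarked after \Cref{thm1}. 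Substituting the bound on $\RP_T(\cX)$ yields the claimed inequality.

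The only conceptual obstacle is the construction of the correct primal comparator: one must trade off playing $x^\star$ (which may be infeasible) against enough mass on $\xi^\circ$ to drive per-round expected constraints to zero, while keeping the reward at $\rhoadv/(1+\rhoadv)\optadv$. The remaining difficulty is bookkeeping: making sure that the high-probability event from \Cref{thm1} that pins down $\|\vlambda_t\|_1$ is the same event under which the $2$-scale-free bound on $\RP_T(\cX)$ is used, so that the two bounds compose without an extra union bound blow-up.
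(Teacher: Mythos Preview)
Your proposal is correct and follows essentially the same approach as the paper's proof: both construct the mixture $\xi=\tfrac{\rhoadv}{1+\rhoadv}\delta_{x^\star}+\tfrac{1}{1+\rhoadv}\xi^\circ$, use per-round feasibility of $\xi$ to drop the dual term in the primal comparator, invoke the self-bounding lemma (\Cref{thm1}) to bound $L$ and hence $\RP_T(\cX)\le L^2\,\overline{\RP}_{T,\delta}(\cX)$, and control $\sum_t\langle\vlambda_t,\vg_t(x_t)\rangle$ via the OGD regret bound at the comparator $\vlambda=\vzero$. The only difference is cosmetic ordering (you apply \Cref{thm1} first, the paper applies it last), and your remark about composing the high-probability events is a useful sanity check that the paper leaves implicit.
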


On the other hand, for the stochastic setting we can prove the following result:

\begin{restatable}{theorem}{stocregret}\label{th:stochregret}
	If $\AlgD$ is $\OGD$ with learning rate $\eta_\OGD$ and domain $\cD:=\mathbb{R}^m_{\ge 0}$, and $\AlgP$ is $2$-scale-free, then in the stochastic setting, in high probability:
	\[
	\Rew\ge \optstoc-\widetilde O\left(\left(\frac{m}{\rhostoc}\right)^2\overline{\RP}_{T,\delta}(\cX)\right).
	\]
\end{restatable}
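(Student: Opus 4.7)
The plan is to combine the self-bounding lemma (\Cref{thm1}) with the standard primal--dual Lagrangian decomposition, exploiting that in the stochastic setting the expected game $\cL_{\bar f,\bar \vg}(\cdot,\cdot)$ is the same at every round. First, on the high-probability event of \Cref{thm1} we have $\|\vlambda_t\|_1\le 13m/\rhostoc$ for all $t$, so the range of primal utilities is bounded by $L\le 1+13m/\rhostoc=O(m/\rhostoc)$. Feeding this bound into the $2$-scale-freeness of $\AlgP$ yields
\[
\RP_T(\cX)\le L^2\,\overline{\RP}_{T,\delta}(\cX)=\widetilde O\!\left(\left(\tfrac{m}{\rhostoc}\right)^{2}\overline{\RP}_{T,\delta}(\cX)\right).
\]

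Second, I compare the cumulative primal utility against the benchmark $\xi^\star\in\Delta(\cX)$ (approximately) achieving $\optstoc$. Since primal no-regret extends to convex combinations of actions,
\[
\sum_{t=1}^T \uP_t(x_t)\ge \sum_{t=1}^T \mathbb{E}_{x\sim\xi^\star}[\uP_t(x)]-\RP_T(\cX).
\]
Conditioning on $\mathcal{F}_{t-1}$, which makes $\vlambda_t$ measurable while $(f_t,\vg_t)$ is an independent stochastic sample,
\[
\mathbb{E}\bigl[\mathbb{E}_{x\sim\xi^\star}[\uP_t(x)]\mid \mathcal{F}_{t-1}\bigr]=\mathbb{E}_{x\sim\xi^\star}[\bar f(x)]-\langle\vlambda_t,\mathbb{E}_{x\sim\xi^\star}[\bar\vg(x)]\rangle\ge \optstoc,
\]
because $\vlambda_t\ge\vzero$ and $\mathbb{E}_{x\sim\xi^\star}[\bar\vg(x)]\le\vzero$ by feasibility. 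Each martingale increment has conditional range $O(m/\rhostoc)$ on the self-bounding event, so an Azuma--Hoeffding argument yields
\[
\sum_{t=1}^T \mathbb{E}_{x\sim\xi^\star}[\uP_t(x)]\ge T\cdot\optstoc-\widetilde O\!\left(\tfrac{m}{\rhostoc}\sqrt{T}\right).
\]

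Third, on the dual side I use the OGD regret against the trivial comparator $\vlambda=\vzero\in\cD$. Since $\AlgD$ is initialized at $\vzero$ with step size $\eta_\OGD=\widetilde O(1/(m\,\overline{\RP}_{T,\delta}(\cX)))$, and each dual gradient $\vg_t(x_t)$ has squared $\ell_2$-norm at most $m$, the standard OGD inequality gives
\[
-\sum_{t=1}^T\langle\vlambda_t,\vg_t(x_t)\rangle=\sum_{t=1}^T[\uD_t(\vzero)-\uD_t(\vlambda_t)]\le \tfrac{\eta_\OGD\,mT}{2}=\widetilde O\!\left(\tfrac{T}{\overline{\RP}_{T,\delta}(\cX)}\right)=\widetilde O(\sqrt T),
\]
where the last step uses that $\overline{\RP}_{T,\delta}(\cX)=\widetilde \Omega(\sqrt T)$ in the regimes of interest. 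Expanding $\sum_t\uP_t(x_t)=\Rew-\sum_t\langle\vlambda_t,\vg_t(x_t)\rangle$ and chaining the three inequalities gives $\Rew\ge T\cdot\optstoc-\widetilde O((m/\rhostoc)^{2}\overline{\RP}_{T,\delta}(\cX))$, as claimed.

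The main technical obstacle is the martingale concentration in the second step: since $\vlambda_t$ depends on the entire past, any naive bound on the deviation of $\sum_t\mathbb{E}_{x\sim\xi^\star}[\uP_t(x)]$ from its conditional mean would scale with an a priori unknown upper bound on the dual iterates. The point of the self-bounding lemma is precisely to provide, with high probability, a data-dependent control $\|\vlambda_t\|_1\lesssim m/\rhostoc$ \emph{without} ever projecting, so that the effective range of the martingale differences -- and hence the concentration error -- is of the same order as the primal regret term $(m/\rhostoc)^{2}\overline{\RP}_{T,\delta}(\cX)$ and does not inflate the final bound.
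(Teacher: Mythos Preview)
Your proposal is correct and follows essentially the same route as the paper: invoke \Cref{thm1} to bound $\|\vlambda_t\|_1\le 13m/\rhostoc$ (hence $L=O(m/\rhostoc)$), apply the primal $2$-scale-free no-regret bound against the optimal feasible mixture $\xi^\star$, replace $(f_t,\vg_t)$ by $(\bar f,\bar\vg)$ via Azuma--Hoeffding with increment range $O(m/\rhostoc)$ (this is exactly \Cref{lm:HoefEmpty} in the paper), drop the nonpositive term $\langle\vlambda_t,\mathbb{E}_{x\sim\xi^\star}[\bar\vg(x)]\rangle$, and finally use the OGD regret of $\AlgD$ against $\vlambda^\ast=\vzero$ to control $-\sum_t\langle\vlambda_t,\vg_t(x_t)\rangle\le\tfrac12\eta_{\OGD}mT$. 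Your discussion of why the self-bounding lemma is needed for the concentration step matches the paper's use of $M=13m/\rhostoc$ in \Cref{lm:HoefEmpty}.
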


\begin{remark}
	Any algorithm with vanishing constraints violations can be employed to handle also \bwk constraints. In such setting, the learner has resource-consumption constraints with \emph{hard stopping} (\ie once the budget for a resource is fully depleted the learner must play the void action until the end of time horizon). 
	This does not yield any fundamental complication for our framework. Indeed, we could introduce an initial phase of $o(T)$  rounds in which the algorithm collects the extra budget needed to cover potential violations, before starting the primal-dual procedure.  
\end{remark}
\section{Applications}\label{sec:applications}

In this section, we show how our framework can be instantiated to handle scenarios such as bandits with general constraints, as well as contextual bandits with constraints (\ie \cbwlc).
Thanks to the modularity of the results derived in the previous sections, we only need to provide an algorithm $\AlgP$ which is $2$-scale-free and weakly adaptive for a desired action space $\cX$ and rewards $\uP_t$. 

\subsection{Bandits with General Constraints}

In this setting, the action space is $\cX=\range{K}$. \cite{castiglioni2023online} showed that the $\EXPSIX$ algorithm introduced by \cite{neu2015explore} can be used as $\AlgP$, since it guarantees sublinear weakly adaptive regret in high probability, and it is $2$-scale-free. 
\begin{theorem}[Theorem 8.1 of \cite{castiglioni2023online}]\label{thm:adaptive regret primal short}
	$\EXPSIX$ instantiated with suitable parameters guarantees that, with probability at least $1-\delta$ that
	\(
	\sup_{I=\range{t_1,t_2}}  \RP_{I}(\cX)= O\mleft(\sqrt{KT}\log\mleft({K T}{\delta^{-1}}\mright)\mright).
	\)
\end{theorem}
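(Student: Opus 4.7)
The plan is to analyze a single instance of $\EXPSIX$ (exponential weights with implicit exploration plus a small uniform-mixing shift) with parameters tuned as a function of the horizon $T$, and to show that the standard potential-function argument localizes to any sub-interval $I=\range{t_1,t_2}$, yielding a regret bound that does not degrade with $t_1$ or with $t_2-t_1$.

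First I would instantiate the algorithm as $p_t(i)=(1-\mu)\,w_t(i)/W_t+\mu/K$ with $W_t=\sum_j w_t(j)$ and exponential weights $w_{t+1}(i)=w_t(i)\exp(\eta\,\tilde u_t(i))$ driven by the IX estimator $\tilde u_t(i)=\uP_t(x_t)\,\indicator{x_t=i}/(p_t(i)+\gamma)$. The ``suitable parameters'' are $\eta=\Theta(\sqrt{\log(KT)/(KT)})$, $\gamma=\Theta(\sqrt{\log(KT/\delta)/(KT)})$, and uniform-mixing level $\mu=1/T$, so that $p_t(i)\ge\mu/K$ at every round and for every arm throughout $\range{T}$.

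Next I would fix a sub-interval $[t_1,t_2]$ and a comparator $i^\star\in\range{K}$ and carry out the usual potential telescoping on $\Phi_t=\eta^{-1}\log\sum_i w_t(i)$ restricted to this sub-interval. The telescope gives
\[
\sum_{t=t_1}^{t_2}\bigl(\tilde u_t(i^\star)-\langle p_t,\tilde u_t\rangle\bigr)\le \eta^{-1}\log\frac{W_{t_1}}{w_{t_1}(i^\star)}+\eta\sum_{t=t_1}^{t_2}\langle p_t,(\tilde u_t)^2\rangle .
\]
The mixing step forces $w_{t_1}(i^\star)/W_{t_1}\ge p_{t_1}(i^\star)-\mu/K\ge \mu/(2K)$, so the entrance term is $O(\log(KT)/\eta)$ \emph{independently of $t_1$}. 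The second-order term is controlled per round by $\langle p_t,(\tilde u_t)^2\rangle\le K$ through standard IX variance bookkeeping combined with a Bernstein-type concentration for the running sum, giving $O(\eta KT)$. The conversion from $\tilde u_t$ to the true rewards $\uP_t$ invokes Neu's high-probability lemma: with probability $1-\delta$, $|\sum_{t\in I}(\tilde u_t(i)-\uP_t(i))|=O(\gamma^{-1}\log(K/\delta))$ uniformly after a union bound over the $O(KT^2)$ arm--interval pairs, which costs only an extra $\log T$ factor absorbed into $\log(KT/\delta)$.

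Summing the three terms with the chosen $(\eta,\gamma,\mu)$ yields $O(\sqrt{KT}\log(KT/\delta))$ simultaneously on every sub-interval $I$, as claimed. The main obstacle is the control of the entrance term $\log(W_{t_1}/w_{t_1}(i^\star))$: without the shift $\mu$, by an intermediate time $t_1$ the weights can be exponentially concentrated on a suboptimal arm, so this term would scale linearly in $t_1$ and destroy the bound. Mixing with the uniform distribution at the rate $\mu=1/T$ is the minimal perturbation that keeps this entrance term logarithmic while costing only $\mu T=O(1)$ in extra reward, and is precisely what makes a \emph{single} $\EXPSIX$ instance weakly adaptive without any meta-algorithmic restart or aggregation.
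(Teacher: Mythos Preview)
The paper does not actually prove this theorem; it is quoted verbatim from \cite{castiglioni2023online} and used as a black box. So there is no ``paper's own proof'' to compare against, and your attempt has to be judged on its own.

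Your argument has a genuine gap at the step you yourself flag as the main obstacle. You claim the uniform shift $\mu$ on the \emph{sampling} distribution $p_t$ yields
\[
\frac{w_{t_1}(i^\star)}{W_{t_1}} \;\ge\; p_{t_1}(i^\star)-\frac{\mu}{K}\;\ge\;\frac{\mu}{2K},
\]
but the second inequality does not follow: from $p_t(i)=(1-\mu)\,w_t(i)/W_t+\mu/K$ all you can conclude is $p_t(i)\ge\mu/K$, so $p_t(i)-\mu/K\ge 0$, not $\ge\mu/(2K)$. Mixing the play distribution leaves the underlying exponential weights untouched; after $t_1$ rounds in which $i^\star$ has been unrewarding, $w_{t_1}(i^\star)/W_{t_1}$ can be as small as $e^{-\Theta(\eta t_1)}$, making the entrance term $\eta^{-1}\log(W_{t_1}/w_{t_1}(i^\star))=\Theta(t_1)$ and destroying the bound. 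A two-arm instance where arm $1$ pays $1$ on $\range{t_1}$ and $0$ afterwards (and vice versa for arm $2$) already exhibits this.

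To get weakly-adaptive guarantees from a single exponential-weights instance you need the mixing to act on the \emph{weights} (a Fixed-Share / Herbster--Warmuth style update $w_{t+1}(i)\leftarrow(1-\alpha)\,w_t(i)e^{\eta\tilde u_t(i)}+\alpha\,W_{t+1}/K$), not merely on the sampling distribution. That is the mechanism that keeps $\log(W_{t_1}/w_{t_1}(i^\star))=O(\log(K/\alpha))$ uniformly in $t_1$. Your IX concentration step and the overall decomposition are otherwise standard and fine; the error is localized to this one structural point.
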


Thus, by applying \cref{th:violations} on the violations, and \cref{th:advregret} and \cref{th:stochregret} on the adversarial and stochastic reward guarantees respectively, we get the following result:

\begin{corollary}
	Consider a multi armed bandit problem with constraints. There exists an algorithm that w.h.p.~guarantees, in the adversarial setting, violations at most $\tilde O\left(\frac{m^{2}}{\rhoadv}\sqrt{KT}\right)$ and
	\(  \Rew\ge \frac\rhoadv{1+\rhoadv}\optadv -\tilde O\left(\frac{m^{2}}{\rhoadv^2} \sqrt {KT} \right),   \)
	while,in the stochastic setting, it guarantees violations at most $\tilde O\left(\frac{m^{2}}{\rhostoc}\sqrt{KT}\right)$ and reward at least
	\(      \Rew\ge \optstoc - \tilde O\left(\frac{m^{2}}{\rhostoc^2} \sqrt{K T} \right).
	\)
	
\end{corollary}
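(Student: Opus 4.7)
The plan is to verify that the conditions of the general theorems in \Cref{sec:general} are all met by the $\EXPSIX$-$\OGD$ pairing, and then to plug in the concrete regret bound to obtain the claimed rates. Essentially, this corollary is a direct instantiation of the modular framework: the only thing specific to the multi-armed bandit setting is the choice of the primal algorithm and its adaptive regret guarantee.

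First, I would pick $\AlgP = \EXPSIX$ over $\cX = \range{K}$, with the parameters specified in \Cref{thm:adaptive regret primal short}, and $\AlgD = \OGD$ on $\cD = \Reals^m_{\ge 0}$ with learning rate $\eta_\OGD$ chosen as in the self-bounding lemma (\Cref{thm1}). By \Cref{thm:adaptive regret primal short}, $\EXPSIX$ is weakly adaptive with high-probability bound $\overline{\RP}_{T,\delta}(\cX) = O\bigl(\sqrt{KT}\log(KT\delta^{-1})\bigr) = \widetilde{O}(\sqrt{KT})$, and the cited result also gives that it is $2$-scale-free. Thus the hypotheses of \Cref{th:violations}, \Cref{th:advregret}, and \Cref{th:stochregret} are all satisfied.

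Next, I would simply substitute this value of $\overline{\RP}_{T,\delta}(\cX)$ into the three theorems. Substituting into \Cref{th:violations} yields, with high probability,
\[
V_T = \widetilde O\!\left(\frac{m^2}{\rho}\sqrt{KT}\right),
\]
which gives both violation bounds (with $\rho = \rhoadv$ and $\rho = \rhostoc$ respectively). Substituting into \Cref{th:advregret} gives the adversarial reward guarantee
\[
\Rew \ge \frac{\rhoadv}{1+\rhoadv}\optadv - \widetilde O\!\left(\left(\frac{m}{\rhoadv}\right)^{\!2}\sqrt{KT}\right),
\]
and substituting into \Cref{th:stochregret} gives the stochastic guarantee
\[
\Rew \ge \optstoc - \widetilde O\!\left(\left(\frac{m}{\rhostoc}\right)^{\!2}\sqrt{KT}\right).
\]
A union bound over the three high-probability events (one for each theorem) preserves the overall $1-\delta$ confidence up to constants absorbed in the $\widetilde O(\cdot)$.

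Since the proof is entirely modular, there is no real obstacle: the only non-trivial ingredient is the existence of a primal algorithm that is simultaneously bandit-feedback, weakly adaptive, and $2$-scale-free, and this is exactly what \Cref{thm:adaptive regret primal short} provides. The remainder is bookkeeping, verifying that the stated parameters of $\EXPSIX$ and the learning rate of $\OGD$ are consistent, and that the implicit constants and logarithmic factors in $\overline{\RP}_{T,\delta}(\cX)$ propagate correctly through \Cref{th:violations,th:advregret,th:stochregret}.
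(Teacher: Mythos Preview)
Your proposal is correct and follows exactly the same approach as the paper: the corollary is stated as an immediate consequence of plugging the $\EXPSIX$ primal regret bound from \Cref{thm:adaptive regret primal short} into \Cref{th:violations}, \Cref{th:advregret}, and \Cref{th:stochregret}, and the paper does not even spell out a separate proof beyond that sentence.
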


\subsection{Contextual Bandits with Constraints}\label{sec:Lagrang}

Following \cite{SlivkinsSF23}, we apply our general framework to contextual bandits with regression oracles. 
In this setting, the decision maker observes a context $z_t\in\cZ$ from some context set $\cZ$, where $z_t$ is possibly chosen by an adversary. Then, the decision maker picks its decision $a_t$ from an action set $\cA$. Then, the reward is computed as a function of the context and the action, \ie $f_t:\cZ\times\cA\to[0,1]$, and similarly for the constraints $\vg_t:\cZ\times\cA\to[-1,1]^m$. At each $t$, $f_t$ and $\vg_t$ are drawn from some distribution. More precisely, there exist a class $\cF$ of functions and $\bar f,\bar g_i\in \cF$ such that for all $(z,a)\in\cZ\times\cA$ it holds that  $\E[f_t(z,a)|z,a]=\bar f(z,a)$ and $\E[g_{t,i}(z,a)|z,a]=\bar g_i(z,a)$ for $i\in\range{m}$.

We slightly modify the primal-dual algorithm to handle contexts. In particular, $\AlgP$ gets to observe a context $z_t$ before deciding their action. Formally, we can use the machinery introduced in \Cref{sec:prelim} by taking $\cX$ as the set of deterministic policies $\Pi\coloneqq\{\pi:\cZ\to\cA\}$. Then, $\uP_t(\pi)=f_t(z_t, \pi(z_t))-\langle \vlambda_t, \vg_t(z_t, \pi(z_t)) \rangle$, and the action $a_t$ is computed through $\pi_t$ returned by the primal algorithm.
%
%
Although this choice transforms the contextual framework into an application of the framework introduced in \Cref{sec:prelim}, in practical terms, it is simpler to think of $a_t$ as the direct output of $\AlgP$ upon observing the context $z_t$. The extended primal-dual framework is sketched in \Cref{alg:alg2}.

\begin{figure}[!t]
	\begin{minipage}[t]{0.44\textwidth}
		\begin{algorithm}[H]
			\caption{Primal-Dual Algorithm\\ for Contextual Bandits}
			\label{alg:alg2}
			\begin{algorithmic}[1]
				\STATE {\bfseries Input:} $\AlgP$ and $\AlgD$.
				\FOR{$t = 1, 2, \ldots , T$}
				\STATE Observe context $z_t$ 
				\STATE {\bfseries Dual decision:} $\vlambda_t \gets \AlgD$
				\STATE{\bfseries Primal decision:} 
				\begin{ALC@g}
					\STATE $a_t \gets \AlgP(z_t,\vlambda_t)$
				\end{ALC@g}
				\STATE {\bfseries Observe:} $f_t(z_t,a_t)$ and $\vg_t(z_t,a_t)$ 
				\STATE {\bfseries Primal update:} feed $\uP_t(a_t)$ to $\AlgP$, where
				\begin{ALC@g}
					\STATE \hspace{-0.1cm}$\uP_t(a_t)\hspace{-0.05cm} = \hspace{-0.05cm} f_t(z_t,a_t) \hspace{-0.05cm}  - \hspace{-0.05cm}  \langle \vlambda_t, \vg_t(z_t,a_t)\rangle$
				\end{ALC@g}
				\STATE {\bfseries Dual update:} feed $\uD_t$ to $\AlgD$, where
				\begin{ALC@g}
					\item $\uD_t(\vlambda)- f_t(z_t,a_t)+\langle\vlambda,\vc_t(z_t,a_t)\rangle$
				\end{ALC@g}
				\ENDFOR
				\vspace{0.175cm}
			\end{algorithmic}
		\end{algorithm}
	\end{minipage}
	\hfill
	\begin{minipage}[t]{0.55\textwidth}
		\begin{algorithm}[H]
			\caption{Primal Algorithm for Contextual Bandits}
			\label{alg:algPrimalContex}
			\begin{algorithmic}[1]
				\STATE {\bfseries Input:} Learning rate $\eta_\term{P}$
				\STATE {\bfseries Get regressors from online regression oracles:} 
				\begin{ALC@g}
					\STATE $\hat f_t\gets \cO_f$, and $\hat g_{t,i}\gets \cO_i$ for all $i\in\range{m}$
				\end{ALC@g}
				\STATE Observe context $z_t$ and dual variable $\vlambda_t$
				\STATE For all $a\in\cA$ compute $\hat\cL_t(a):=\cL_{\hat f_t, \hat \vg_{t}}((z_t,a), \vlambda_t)$
				\STATE Compute $\xi_t\in\Delta(\cA)$ as:
				\vspace{-2mm}
				\[
				\xi_t(a)=\left(\mu_t+\eta_\term{P}\left(\max_{a'}\hat \cL_t(a')-\hat\cL_t(a)\right)\right)^{-1}
				\]\LComment{$\mu_t$ is such that $\xi_t\in\Delta(\cA)$}
				\STATE Sample $a_t\sim \xi_t$ and return it.
				\STATE {\bfseries Update online regression oracles:} 
				\begin{ALC@g}
					\STATE Feed $(z_t,a_t,f_t(z_t,a_t))$ to $\cO_f$
					\STATE Feed $(z_t,a_t,g_{t,i}(z_t,a_t))$ to $\cO_i$  $\forall i\in\range{m}$
				\end{ALC@g}
			\end{algorithmic}
		\end{algorithm}
	\end{minipage}
\end{figure}

We assume to have $m+1$ online regression oracles $(\cO_f,\cO_1,\ldots, \cO_m)$ for the functions $\bar f$ and $\bar g_1,\ldots, \bar g_m$, respectively. The regression oracle $\cO_f$ produces, at each $t$, a regressor $\hat f_t\in\cF$ that tries to approximate the \emph{true} regressor $\bar f$. Then, the oracle is feed with a new data point, comprised of a context $z_t\in\cZ$ and an action $a_t\in\cA$, and the performance of the regressor is evaluated on the basis of its prediction for the tuple $(z_t,a_t)$. 
The online regression oracle $\cO_f$ is updated with the labeled data point $(z_t, a_t, f_t(z_t,a_t))$.
Overall, its performance is measured by its cumulative $\ell_2$-error:
\[\textstyle
\err(\cO_f)\coloneqq\sum_{t\in\range{T}}\left(\hat f_t(z_t, a_t)-\bar f(z_t,a_t)\right)^2.
\]
Each online regression oracle $(\cO_i)_{i\in\range{m}}$ works analogously, and its performance is measured by
\(
\err(\cO_i)\coloneqq\sum_{t\in\range{T}}\left(\hat g_t(z_t, a_t)-\bar g(z_t,a_t)\right)^2.
\)

By combining the online regression oracles $\cO_f$ and $\{\cO_i\}_{i\in\range{m}}$ we can build an online regression oracle $\cO_\cL$ for the Lagrangian which outputs  regressors $\hat \cL_t:\cZ\times\cA\to\mathbb{R}$ defined as:
\begin{align*}
\hat \cL_t(z,a) &= \cL_{\hat f_t, \hat \vg_t}((z, a), \vlambda_t)=\hat f_t((z,a)) - \langle\vlambda_t, \hat \vg_t(z,a)\rangle,
\end{align*}
while we define $\bar \cL(z,a)\coloneqq\cL_{\bar f, \bar \vg}((z,a), \vlambda_t)$.
The $\ell_2$-error of $\cO_\cL$ can be bounded via the following extension of \cite[Theorem~16]{slivkins2023contextual}.

\begin{restatable}{lemma}{lemmaerrL}\label{lem:La}
	The error of $\cO_\cL$ can be bounded as
	\[\textstyle
	\err(\cO_\cL)\le 2\err(\cO_f)+2\left(\sup_{t\in\range{T}}\|\vlambda_t\|_1\right)^2\hspace{-2mm}\sum_{i\in \range{m}}\err(\cO_i).
	\]
\end{restatable}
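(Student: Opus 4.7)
The plan is to attack this bound pointwise in $t$, then sum, so the only inequalities I need are $(a+b)^2\le 2a^2+2b^2$ and a Hölder-type bound on the inner product with $\vlambda_t$. Concretely, I would begin by writing out the per-round error integrand using the definitions of $\hat \cL_t$ and $\bar \cL$:
\[
\hat \cL_t(z_t,a_t)-\bar \cL(z_t,a_t) \;=\; \bigl(\hat f_t(z_t,a_t)-\bar f(z_t,a_t)\bigr) \;-\; \bigl\langle \vlambda_t,\;\hat \vg_t(z_t,a_t)-\bar \vg(z_t,a_t)\bigr\rangle.
\]

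Next I would square both sides and apply the elementary inequality $(u-v)^2\le 2u^2+2v^2$ to split the per-round contribution into a ``reward-prediction'' piece and a ``cost-prediction'' piece. For the cost piece, Hölder's inequality gives $|\langle \vlambda_t,\vec v\rangle|\le \|\vlambda_t\|_1\|\vec v\|_\infty$, and since $\|\vec v\|_\infty^2\le \sum_{i\in\range{m}} v_i^2$ for any real vector $\vec v$, taking $\vec v=\hat \vg_t(z_t,a_t)-\bar \vg(z_t,a_t)$ yields
\[
\bigl(\hat \cL_t(z_t,a_t)-\bar \cL(z_t,a_t)\bigr)^2 \le 2\bigl(\hat f_t(z_t,a_t)-\bar f(z_t,a_t)\bigr)^2 + 2\,\|\vlambda_t\|_1^2\,\sum_{i\in\range{m}}\bigl(\hat g_{t,i}(z_t,a_t)-\bar g_i(z_t,a_t)\bigr)^2.
\]

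Finally, I would sum this pointwise bound over $t\in\range{T}$. On the right-hand side, the first sum is exactly $2\,\err(\cO_f)$ by definition, while in the second term I would upper bound each factor $\|\vlambda_t\|_1^2$ by the uniform bound $\bigl(\sup_{t\in\range{T}}\|\vlambda_t\|_1\bigr)^2$, pull it outside the double sum, and then swap the two summations to recognise the inner sum over $t$ as $\err(\cO_i)$. This yields precisely the stated inequality. No step here is a real obstacle; the only thing to be careful about is choosing the right Hölder pairing ($\ell_1$--$\ell_\infty$) rather than $\ell_2$--$\ell_2$, since the latter would give a spurious $m$ factor, and making sure the sup over $t$ in $\|\vlambda_t\|_1$ is pulled out \emph{after} bounding the $\ell_\infty$ norm by the sum of squares, so that the resulting quantity matches $\err(\cO_i)$ summed over $i$ with no additional constants.
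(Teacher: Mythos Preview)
Your proposal is correct and follows essentially the same argument as the paper: split the squared Lagrangian error via $(u+v)^2\le 2u^2+2v^2$, apply the $\ell_1$--$\ell_\infty$ H\"older inequality to the cost term, bound $\|\cdot\|_\infty^2$ by the sum of squared components, pull out $\sup_t\|\vlambda_t\|_1^2$, and then sum over $t$ to recover $\err(\cO_f)$ and $\sum_i\err(\cO_i)$. The only cosmetic difference is that you work pointwise in $t$ before summing, whereas the paper carries the sum throughout.
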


The fundamental idea of \cite{foster2020beyond} is to reduce (unconstrained) contextual bandit problems to online linear regression. Recently, this ideas was extended in \cite{SlivkinsSF23,han2023optimal} in order to design a primal algorithm $\AlgP$ capable of handling stochastic contextual bandits with constraints (see \Cref{alg:algPrimalContex}).

To apply \Cref{alg:algPrimalContex} to our framework we need to find an algorithm $\AlgP$ which is $2$-scale-free and weakly adaptive with high probability.
We extend the result \citep{foster2020beyond} to prove that their reduction actually satisfies the required guarantees.

\begin{restatable}{lemma}{theoremadaptiveContextual}\label{th:adaptivecontextual}
	Assume that $\max\{\err(\cO_f),\err(\cO_i)\}\le \overline{\err}$. Then, we have that \Cref{alg:algPrimalContex} with $\eta_\term{P}\coloneqq\sqrt{KT}$ guarantees that
	\(
	\sup_{I=\range{t_1,t_2}}\RP_I(\Pi)=\tilde O\left( m\cdot \overline{\err}\cdot L^2 \cdot \sqrt{KT}\right)
	\)
	with high probability, where $L:=\sup_{t\in\range{T}, \pi\in\Pi} |\uP_t(\pi)|$.
\end{restatable}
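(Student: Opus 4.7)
I adapt the FALCON/SquareCV-style analysis of \cite{foster2020beyond} to the Lagrangian surrogate loss $\uP_t$, arguing simultaneously over every sub-interval $I=\range{t_1,t_2}$. Fix such $I$ and a comparator $\pi^*\in\Pi$, and decompose the per-comparator regret as
\[
\textstyle\sum_{t\in I}[\uP_t(\pi^*(z_t))- \uP_t(a_t)] = \sum_{t\in I}[\uP_t(\pi^*(z_t))-\bar{\cL}(z_t,\pi^*(z_t))] + \sum_{t\in I}[\bar{\cL}(z_t,\pi^*(z_t))-\bar{\cL}(z_t,a_t)] + \sum_{t\in I}[\bar{\cL}(z_t,a_t)-\uP_t(a_t)].
\]
The outer two sums are martingale differences with range $O(L)$; Freedman's inequality together with a union bound over the $\binom{T}{2}$ possible intervals delivers uniform control of order $\tilde O(L\sqrt{T})$.

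The central task is bounding the middle ``expected regret'' term uniformly in $\pi^*$. Adding and subtracting the oracle estimator $\hat{\cL}_t$ splits it into three pieces: an \emph{IGW gap} term $\sum_{t\in I}[\hat{\cL}_t(z_t,\pi^*(z_t)) - \mathbb{E}_{a\sim\xi_t}\hat{\cL}_t(z_t,a)]$, an \emph{off-policy} error $\sum_{t\in I}(\bar{\cL}-\hat{\cL}_t)(z_t,\pi^*(z_t))$, and an \emph{on-policy} error $\sum_{t\in I}\mathbb{E}_{a\sim\xi_t}[(\hat{\cL}_t-\bar{\cL})(z_t,a)]$. Direct computation from the definition of $\xi_t$ in \Cref{alg:algPrimalContex} and the normalization $\sum_a\xi_t(a)=1$ yields two IGW identities that are \emph{independent of} $L$: first, $\mathbb{E}_{a\sim\xi_t}[\max_{a'}\hat{\cL}_t(a')-\hat{\cL}_t(a)]\le K/\eta_\term{P}$, which bounds the gap term by $TK/\eta_\term{P}=\sqrt{KT}$ given $\eta_\term{P}=\sqrt{KT}$; second, $\mathbb{E}_{a\sim\xi_t}[1/\xi_t(a)]=\mu_t\le K$. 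The second identity combined with Cauchy--Schwarz and Freedman-type concentration of the realized $\ell_2$ errors around $\err(\cO_\cL)$ bounds the on-policy error by $\sqrt{KT\cdot\err(\cO_\cL)}$. For the off-policy error I invoke AM--GM with a tuning parameter $\lambda$, namely $|x|\le \tfrac{\lambda}{2}\xi_t(\pi^*(z_t))x^2 + \tfrac{1}{2\lambda\xi_t(\pi^*(z_t))}$, use the IGW upper bound $1/\xi_t(a)\le \mu_t + 2L\eta_\term{P}\le K+2L\eta_\term{P}$, exploit $\sum_t\xi_t(\pi^*(z_t))(\bar{\cL}-\hat{\cL}_t)^2\le \err(\cO_\cL)$, and optimize $\lambda$.

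Assembling the pieces, \Cref{lem:La} together with the observation that $L$ dominates $\sup_t\|\vlambda_t\|_1$ up to constants (since $|\uP_t(\pi)|\le 1+\|\vlambda_t\|_1$ for the worst-case action) gives $\err(\cO_\cL)\le O(mL^2\overline{\err})$, and substituting all bounds with $\eta_\term{P}=\sqrt{KT}$ yields the stated $\tilde O(m\overline{\err}L^2\sqrt{KT})$ bound uniformly over $I$ and $\pi^*$. Adaptivity is automatic: every step above is either per-round (the two IGW identities) or already uniform in $I$ (Freedman plus union bound over $O(T^2)$ sub-intervals). \textbf{Main obstacle.} The delicate point is the off-policy estimation term: one must balance the AM--GM parameter $\lambda$ so that the $L$-dependence entering through $1/\xi_t\le K+2L\eta_\term{P}$ does not blow up, while still using the tight on-policy identity $\mathbb{E}[1/\xi_t(a)]\le K$ for the on-policy piece. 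Because $\eta_\term{P}=\sqrt{KT}$ is chosen independently of $L$, 2-scale-freeness is not built into the algorithm and must emerge from the analysis; this is precisely what the IGW identities above accomplish, and converting the resulting in-expectation bounds to high-probability ones holding uniformly over every sub-interval via Freedman is the remaining technical chore.
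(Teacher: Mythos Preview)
There is a genuine gap in your off-policy analysis. You bound the IGW gap term by $K/\eta_{\term P}$ per round and then, separately, bound the off-policy term via weighted AM--GM with a free parameter $\lambda$ together with the pointwise estimate $1/\xi_t(\pi^*(z_t))\le \mu_t+2L\eta_{\term P}\le K+2L\eta_{\term P}$. Carrying this out, the off-policy contribution is at best
\[
\min_{\lambda>0}\Big\{\tfrac{\lambda}{2}\,\err(\cO_\cL)+\tfrac{1}{2\lambda}T(K+2L\eta_{\term P})\Big\}
=\sqrt{\err(\cO_\cL)\cdot T(K+2L\eta_{\term P})}.
\]
With $\eta_{\term P}=\sqrt{KT}$ and $\err(\cO_\cL)=O(mL^2\,\overline{\err})$ this is of order $L^{3/2}\sqrt{m\,\overline{\err}}\;K^{1/4}T^{3/4}$, not $mL^2\,\overline{\err}\,\sqrt{KT}$. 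No choice of $\lambda$ recovers the $\sqrt{T}$ rate, because the $2L\eta_{\term P}$ term you inserted dominates; your proposed ``balance $\lambda$'' cannot repair this.

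The missing idea is the exact cancellation built into the Foster--Rakhlin argument. If one fixes $\lambda=\eta_{\term P}$ (rather than optimizing it), the off-policy AM--GM produces the term
\[
\tfrac{1}{\eta_{\term P}\,\xi_t(\pi^*(z_t))}=\tfrac{\mu_t}{\eta_{\term P}}+\Delta_t(\pi^*(z_t)),
\qquad \Delta_t(a):=\max_{a'}\hat\cL_t(z_t,a')-\hat\cL_t(z_t,a).
\]
Simultaneously, the IGW gap term equals \emph{exactly} $-\Delta_t(\pi^*(z_t))+(K-\mu_t)/\eta_{\term P}$, not merely $\le K/\eta_{\term P}$. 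The $\pm\Delta_t(\pi^*(z_t))$ cancel, leaving the per-round bound $(K+1)/\eta_{\term P}+\tfrac{\eta_{\term P}}{2}\,\mathbb{E}_{a\sim\xi_t}\big[(\hat\cL_t-\bar\cL)^2(z_t,a)\big]$, which is scale-free. By discarding $-\Delta_t$ in your gap bound you forfeit this cancellation, and the $L$-dependence then leaks in through the pointwise $1/\xi_t$ estimate.

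The paper avoids redoing any of this: it invokes Theorem~1 of \cite{foster2020beyond} as a black box, whose proof already contains the cancellation above and yields the scale-free inequality $\RP_I\le \tfrac{\eta_{\term P}}{2}\,\err(\cO_\cL)+2KT/\eta_{\term P}+\text{(concentration)}$. Weak adaptivity then follows from the one-line observation that \Cref{alg:algPrimalContex} is memoryless in its own past actions (the distribution $\xi_t$ depends only on the current regressors, context, and $\vlambda_t$), so the Foster--Rakhlin analysis can be restarted from any $t_1$; a union bound over the $O(T^2)$ sub-intervals finishes. The $L^2$ factor enters solely through $\err(\cO_\cL)\le O\big(m(1+\sup_t\|\vlambda_t\|_1^2)\,\overline{\err}\big)$ via \Cref{lem:La}. (As a minor aside, your justification that ``$L$ dominates $\sup_t\|\vlambda_t\|_1$'' goes the wrong way: $|\uP_t(\pi)|\le 1+\|\vlambda_t\|_1$ gives $L\le 1+\sup_t\|\vlambda_t\|_1$, not the reverse; the paper makes the same informal substitution, which is harmless in context.)
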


Equipped with a $2$-scale free algorithm that suffers no adaptive regret with high probability, we can combine $\AlgP$ with the results of \cref{th:violations,th:advregret,th:stochregret} to prove the first optimal guarantees for \cbwlc with adversarial contexts.
%
%
\begin{corollary}
		Consider a functional class $\cF$ and an online regression oracle that guarantees  $\ell_2$-error $\overline{\err}$. There exists an algorithm that w.h.p.~guarantees violations at most $\tilde O\left(\frac{m^{3}}{\rhoadv}\overline{\err}\sqrt{KT}\right)$ and reward at least
	\( \Rew\ge \frac\rhoadv{1+\rhoadv}\optadv -\tilde O\left(\overline{\err}\frac{m^3}{\rhoadv^2}\sqrt{KT} \right)     \)
	in the adversarial setting, while it guarantees violations at most $\tilde O\left(\frac{m^{3}}{\rhostoc}\overline{\err} \sqrt{KT}\right)$ and reward at least
	\(    \Rew\ge   \optstoc -\tilde O\left(\overline{\err}\frac{m^3}{\rhostoc^2}\sqrt{KT} \right)  
	\)
	in the stochastic setting.
\end{corollary}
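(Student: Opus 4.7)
The plan is to treat this corollary as a direct instantiation of the general machinery of \Cref{sec:general} in the contextual setting described in \Cref{sec:Lagrang}. The three ingredients needed are (i)~a $2$-scale-free, weakly adaptive primal algorithm, (ii)~the self-bounding/violation bound of \Cref{th:violations}, and (iii)~the reward bounds of \Cref{th:advregret,th:stochregret}. All three are already available, so the argument is essentially a bookkeeping exercise that tracks the dependence on $m$, $\overline{\err}$, and $\sqrt{KT}$ through the composition.

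First, I would invoke \Cref{th:adaptivecontextual}: using \Cref{alg:algPrimalContex} with $\eta_\term{P}=\sqrt{KT}$ as $\AlgP$ and instantiating $\AlgD$ as $\OGD$ with the learning rate $\eta_\OGD$ prescribed in \Cref{thm1}, we obtain a high-probability weakly adaptive regret bound of the form $\sup_{I=\range{t_1,t_2}} \RP_I(\Pi)=\tilde O(m\,\overline{\err}\,L^2\sqrt{KT})$, where $L=\sup_{t,\pi}|\uP_t(\pi)|$. This is exactly the definition of $2$-scale-freeness, so we can identify the instance-independent primal bound as $\overline{\RP}_{T,\delta}(\Pi)=\tilde O(m\,\overline{\err}\,\sqrt{KT})$ and feed it into the general results.

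Next, I would plug this $\overline{\RP}_{T,\delta}(\Pi)$ into \Cref{th:violations}, giving
\[
V_T=\tilde O\!\left(\tfrac{m^2}{\rho}\cdot m\,\overline{\err}\,\sqrt{KT}\right)=\tilde O\!\left(\tfrac{m^3}{\rho}\,\overline{\err}\,\sqrt{KT}\right),
\]
where $\rho=\rhoadv$ or $\rhostoc$ depending on the setting; this matches the claimed violation bounds. For the reward, applying \Cref{th:advregret} in the adversarial case and \Cref{th:stochregret} in the stochastic case with the same $\overline{\RP}_{T,\delta}(\Pi)$ gives, respectively, $\Rew\ge \frac{\rhoadv}{1+\rhoadv}\optadv-\tilde O\!\left(\tfrac{m^3}{\rhoadv^2}\,\overline{\err}\,\sqrt{KT}\right)$ and $\Rew\ge \optstoc-\tilde O\!\left(\tfrac{m^3}{\rhostoc^2}\,\overline{\err}\,\sqrt{KT}\right)$, again matching the statement. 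A union bound over the failure events of \Cref{th:adaptivecontextual}, \Cref{thm1}, and the two reward/violation theorems preserves high probability up to constants absorbed in $\tilde O$.

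The only subtlety I expect is making sure the $L^2$ factor from \Cref{th:adaptivecontextual} is handled correctly: $L$ scales with $1+\|\vlambda_t\|_1$, which a priori depends on $\rho$. However, this is precisely what the self-bounding lemma (\Cref{thm1}) gives us for free, namely $\max_t\|\vlambda_t\|_1\le 13m/\rho$, so $L=O(m/\rho)$ is controlled automatically; the $L^2$ is then absorbed by the $2$-scale-free definition used inside \Cref{th:violations,th:advregret,th:stochregret}, and does not need to be re-introduced when reading off the final rates. Once this accounting is done, the two pairs of bounds in the statement follow immediately.
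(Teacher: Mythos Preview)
Your proposal is correct and follows exactly the approach the paper takes: the paper does not give a standalone proof but simply states that the corollary follows by combining the $2$-scale-free weakly adaptive primal bound of \Cref{th:adaptivecontextual} with \Cref{th:violations,th:advregret,th:stochregret}. Your bookkeeping of the $m$, $\overline{\err}$, and $\sqrt{KT}$ factors, and your remark that the $L^2$ factor is precisely what the $2$-scale-free definition absorbs (so that $\overline{\RP}_{T,\delta}(\Pi)=\tilde O(m\,\overline{\err}\,\sqrt{KT})$), is the right way to make the composition explicit.
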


\cite{foster2020beyond} includes many examples of functional classes $\cF$ that have good online regression oracles, meaning that their error is subpolynomial in the time horizon $T$. We report here some notable mentions for completeness. 

If $\cF$ is a finite set of functions we have that $\overline\err=O(\log|\cF|)$, which comes from using as regression oracles the Vovk forecaster \cite{vovk1995game}.
Another important examples is the case in which $\cF$ is the class of linear functions, \ie $\cF=\{h(z,a)=\langle z_a,\theta\rangle: \theta\in\mathbb{R}^d, \|\theta\|_2\le 1\}$, \ie each actions $a$ is associated with a known feature vector $z_a\in\mathbb{R}^d$ which generates the reward/costs trough a unknown parameter $\theta$ that characterize the linear function. Here, there exists a online regression oracle which provides $\ell_2$-error $\overline\err=O(d\log(T/d))$ \cite{azoury2001relative}.

\bibliography{bib-abbrv,refs}
\bibliographystyle{plainnat}
\clearpage
\appendix
\onecolumn


\section{Omitted Proofs from \Cref{sec:best of both worlds} and \Cref{sec:Lagrangian}}\label{app:A}

\begin{figure*}[!th]
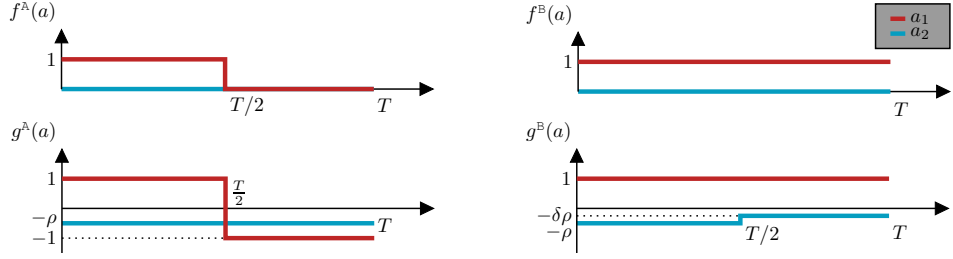

	\begin{subfigure}[t]{1\linewidth}
		\centering\scalebox{.75}{\tikzset{every picture/.style={line width=0.75pt}} 

\input{colors}

\begin{tikzpicture}[x=0.75pt,y=0.75pt,yscale=-1,xscale=1]
	
	\draw    (130,120) -- (377,120) ;
	\draw [shift={(380,120)}, rotate = 180] [fill={rgb, 255:red, 0; green, 0; blue, 0 }  ][line width=0.08]  [draw opacity=0] (8.93,-4.29) -- (0,0) -- (8.93,4.29) -- cycle    ;
	\draw [color=blueGrotto ,draw opacity=1 ][line width=2.25]    (130,120) -- (200,120) -- (270,120) -- (340,120) ;
	\draw    (130,120) -- (130,83) ;
	\draw [shift={(130,80)}, rotate = 90] [fill={rgb, 255:red, 0; green, 0; blue, 0 }  ][line width=0.08]  [draw opacity=0] (8.93,-4.29) -- (0,0) -- (8.93,4.29) -- cycle    ;
	\draw [color=niceRed  ,draw opacity=1 ][line width=2.25]    (130,100) -- (240,100) -- (240,120) -- (340,120) ;
	
	\draw (341,125) node [anchor=north west][inner sep=0.75pt]    {$T$};
	\draw (242,123.4) node [anchor=north west][inner sep=0.75pt]    {$T/2$};
	\draw (128,76.6) node [anchor=south east] [inner sep=0.75pt]    {$f^{\insta}( a)$};
	\draw (128,100) node [anchor=east] [inner sep=0.75pt]    {$1$};

\end{tikzpicture}}
		\centering\scalebox{.75}{\tikzset{every picture/.style={line width=0.75pt}} 

\input{colors}

\begin{tikzpicture}[x=0.75pt,y=0.75pt,yscale=-1,xscale=1]
	
	\draw    (150,140) -- (397,140) ;
	\draw [shift={(400,140)}, rotate = 180] [fill={rgb, 255:red, 0; green, 0; blue, 0 }  ][line width=0.08]  [draw opacity=0] (8.93,-4.29) -- (0,0) -- (8.93,4.29) -- cycle    ;
	\draw [color=blueGrotto  ,draw opacity=1 ][line width=2.25]    (150,140) -- (220,140) -- (290,140) -- (360,140) ;
	\draw    (150,140) -- (150,103) ;
	\draw [shift={(150,100)}, rotate = 90] [fill={rgb, 255:red, 0; green, 0; blue, 0 }  ][line width=0.08]  [draw opacity=0] (8.93,-4.29) -- (0,0) -- (8.93,4.29) -- cycle    ;
	\draw [color=niceRed  ,draw opacity=1 ][line width=2.25]    (150,120) -- (360,120) ;
	\draw  [fill={rgb, 255:red, 155; green, 155; blue, 155 }  ,fill opacity=0.38 ] (350,81) -- (400,81) -- (400,111) -- (350,111) -- cycle ;
	\draw [color=niceRed  ,draw opacity=1 ][line width=2.25]    (360,91) -- (370,91) ;
	\draw [color=blueGrotto  ,draw opacity=1 ][line width=2.25]    (360,101) -- (370,101) ;
	
	\draw (361,146) node [anchor=north west][inner sep=0.75pt]    {$T$};
	\draw (148,96.6) node [anchor=south east] [inner sep=0.75pt]    {$f^{\instb}( a)$};
	\draw (148,120) node [anchor=east] [inner sep=0.75pt]    {$1$};
	\draw (372,101) node [anchor=west] [inner sep=0.75pt]    {$a_{2}$};
	\draw (372,91) node [anchor=west] [inner sep=0.75pt]    {$a_{1}$};

\end{tikzpicture}}
	\end{subfigure}
	\begin{subfigure}[t]{1\textwidth}
		\centering\scalebox{.75}{\tikzset{every picture/.style={line width=0.75pt}} 

\input{colors}

\begin{tikzpicture}[x=0.75pt,y=0.75pt,yscale=-1,xscale=1]
	
	\draw    (150,140) -- (397,140) ;
	\draw [shift={(400,140)}, rotate = 180] [fill={rgb, 255:red, 0; green, 0; blue, 0 }  ][line width=0.08]  [draw opacity=0] (8.93,-4.29) -- (0,0) -- (8.93,4.29) -- cycle    ;
	\draw [color=blueGrotto  ,draw opacity=1 ][line width=2.25]    (150,150) -- (220,150) -- (290,150) -- (360,150) ;
	\draw    (150,170) -- (150,103) ;
	\draw [shift={(150,100)}, rotate = 90] [fill={rgb, 255:red, 0; green, 0; blue, 0 }  ][line width=0.08]  [draw opacity=0] (8.93,-4.29) -- (0,0) -- (8.93,4.29) -- cycle    ;
	\draw [color=niceRed  ,draw opacity=1 ][line width=2.25]    (150,120) -- (260,120) -- (260,160) -- (360,160) ;
	\draw  [dash pattern={on 0.84pt off 2.51pt}]  (150,160) -- (260,160) ;
	
	\draw (361,145) node [anchor=north west][inner sep=0.75pt]    {$T$};
	\draw (262,120) node [anchor=north west][inner sep=0.75pt]    {$\frac{T}{2}$};
	\draw (148,96.6) node [anchor=south east] [inner sep=0.75pt]    {$g^{\insta}(a)$};
	\draw (148,120) node [anchor=east] [inner sep=0.75pt]    {$1$};
	\draw (148,160) node [anchor=east] [inner sep=0.75pt]    {$-1$};
	\draw (148,147) node [anchor=east] [inner sep=0.75pt]    {$-\rho $};

\end{tikzpicture}}
		\centering\scalebox{.75}{\tikzset{every picture/.style={line width=0.75pt}} 

\input{colors}

\begin{tikzpicture}[x=0.75pt,y=0.75pt,yscale=-1,xscale=1]
	
	\draw    (150,140) -- (397,140) ;
	\draw [shift={(400,140)}, rotate = 180] [fill={rgb, 255:red, 0; green, 0; blue, 0 }  ][line width=0.08]  [draw opacity=0] (8.93,-4.29) -- (0,0) -- (8.93,4.29) -- cycle    ;
	\draw [color=blueGrotto  ,draw opacity=1 ][line width=2.25]    (150,150) -- (260,150) -- (260,145) -- (360,145) ;
	\draw    (150,170) -- (150,103) ;
	\draw [shift={(150,100)}, rotate = 90] [fill={rgb, 255:red, 0; green, 0; blue, 0 }  ][line width=0.08]  [draw opacity=0] (8.93,-4.29) -- (0,0) -- (8.93,4.29) -- cycle    ;
	\draw [color=niceRed  ,draw opacity=1 ][line width=2.25]    (150,120) -- (360,120) ;
	\draw  [dash pattern={on 0.84pt off 2.51pt}]  (150,145) -- (260,145) ;
	
	\draw (361,150) node [anchor=north west][inner sep=0.75pt]    {$T$};
	\draw (262,150) node [anchor=north west][inner sep=0.75pt]    {$T/2$};
	\draw (148,96.6) node [anchor=south east] [inner sep=0.75pt]    {$g^{\instb}( a)$};
	\draw (148,120) node [anchor=east] [inner sep=0.75pt]    {$1$};
	\draw (148,150) node [anchor=north east] [inner sep=0.75pt]    {$-\rho $};
	\draw (148,145) node [anchor=east] [inner sep=0.75pt]    {$-\delta\rho$};

\end{tikzpicture}}
	\end{subfigure}
	\caption{Lower bound adversarial setting: rewards and costs in the two instances \insta and \instb.}
	\label{fig:instancetwo}
\end{figure*}

\theoremtightCR*
\begin{proof}
	We show that, for all $\epsilon>0$ and $\delta\in(0,1)$, there exists two instances such that it is impossible to obtain $\E[V(T)]\le\epsilon T$ and 
	\[\frac{\optadv}{\E[\Rew]}\ge\frac{1+\rhoadv}{\rhoadv(1+\delta)+2\epsilon}\] in both instances.
	The two instances are denoted by \insta and \instb respectively, with $\cX=\{a_1, a_2\}$ and sequences of inputs of length $T$. The two instances are identical in the first $T/2$ rounds. 
	Rewards in instance \insta are, for each $t\in\range{T}$, $f^{\insta}_t(a_2)=0$ and $f^{\insta}_t(a_1)=\indicator{t\le T/2}$. On the other hand, in instance \instb we have $f^{\instb}_t(a_2)=0$, and $f^{\instb}_t(a_1)=1$ for all $t\in\range{T}$.
	Costs for the first instance \insta are define as
	\[
	g^{\insta}_t(a_1)\defeq\mleft\{\begin{array}{cc}  
		1 & \textnormal{ if } t\le T/2\\
		-1 & \textnormal{ otherwise }
	\end{array}
	\mright.,
	\]
	and $g^{\insta}_t(a_2)=-\rho$ for all $t\in\range{T}$.
	In the second instance $\instb$, costs are $g^{\instb}_t(a_1)=1$ for all $t\in\range{T}$, and 
	\[
	g^{\insta}_t(a_2)\defeq\mleft\{\begin{array}{cc}  
		-\rho & \textnormal{ if } t\le T/2\\
		-\delta\rho & \textnormal{ otherwise }
	\end{array}
	\mright.,
	\] 
	for some $\delta>0$. The two instances are depicted in \Cref{fig:instancetwo}.
	
	
	Let $N$ be the expected number of times that action $a_1$ is played in rounds $\range{T/2}$, that is
	\[
	N \defeq \sum\limits_{t\in \range{T/2}} \E^{\insta}[x_t=a_1]=\sum\limits_{t\in \range{T/2}} \E^{\instb}[x_t=a_1],
	\]
	where expectation is with respect to the algorithm's randomization. We observe that the algorithm plays in the same way in both instances up to time $T/2$, as they are identical (formally, the KL between instance \insta and \instb is zero in the first $T/2$ rounds).
	Then, we have that the optimal action in instance \insta is to play deterministically action $a_1$. Therefore, $\optadv^{\insta}=T/2$. The expected reward in instance \insta comes only from the number of plays of $a_1$ in the first $T/2$ rounds: $\E^{\insta}[\Rew]=N$.
	On the other hand, call $M$ the expected number of times an algorithm plays action $a_1$ in the last $\range{T/2,T}$ rounds of instance \instb, that is
	\[
	M\coloneqq\sum\limits_{t\in\range{T/2, T}}\E^{\instb}[x_t=a_1].
	\]
	We have that, in order to have $\E^{\instb}[V(T)]\le \epsilon T$ violations in the second instance, we need to play $a_1$ a small number of times:
	\[
	M-\delta\rho\mleft(\frac{T}{2}-M\mright)+N-\rho\mleft(\frac{T}{2}-N\mright)\le \epsilon T,
	\]
	which yields
	\[
	N\le \frac{T(\rho(\delta+1)+2\epsilon)}{2(\rho+1)}.
	\]
	
	Then, we get that
	\[
	\frac{\optadv^{\insta}}{\E^{\insta}[\Rew]}\ge \frac{1+\rho}{\rho(1+\delta)+2\epsilon},
	\]
	which concludes the proof since $\rhoadv^{\insta}=\rho$.
\end{proof}

\propRegretNotEnought*
\begin{proof}
	Consider the instance described in \Cref{ex:instanceone}, and consider an algorithm $\AlgP$ for $\cX=\{a_1,a_2,a_3\}$ such that $x_t=a_3$ for $t\in\range{T/3}$, while $x_t=a_2$ for $t\in\range{T/3,T}$. 
	Moreover, consider an algorithm $\AlgD$ instantiated on $\cD=[0,M]$, with $M\ge 1/\rho$, that plays $\lambda_t=0$ for all $t\in\range{2T/3}$, and $\lambda_t=M$ for all $t\in\range{2T/3, T}$. 
	
	We start by analyzing the primal regret achieved by $\AlgP$:
	\begin{align*}
		\RP_{T}&\coloneqq\sup_{x\in\cX}\sum_{t\in\range{T}}\hspace{-.2cm} \left[f_t(x)-f_t(x_t)-\lambda_t(g_{t,1}(x)-g_{t,1}(x_t))\right]\\
		&=\sup_{x\in\cX}\sum_{t\in\range{T}} \left[f_t(x)- \lambda_t g_{t,1}(x)\right]-\frac{2}{3}T+ \frac{M\rho}{3}T\\
		&=\sum_{t\in\range{T}} \left[f_t(a_1)-\lambda_tg_{t,1}(a_1)\right]+\frac{T}{3}\left(M\rho-2\right)\\
		&=\rho M\frac T3+\frac{T}{3}\left(M\rho-2\right)\\
		&=\frac{T}{3}\left(2M\rho-2\right)\le 0,\\
	\end{align*}
	where we replaced the $\sup$ with the utility at $a_1$ since $M\ge 1/\rho$.
	Moreover, the dual regret is such that  
	\begin{align*}
		\RD_{T}&\coloneqq\sup_{\lambda\in[ 0,M]}\sum_{t\in\range{2T/3, T}}\left(\lambda-M\right)g_{t,1}(x_t)\\
		&=\sup_{\lambda\in[ 0,M]}\frac{T}{3}\left(\lambda-M\right)\rho=0.
	\end{align*}
	However, for a suitable choice of $\rho$, the violations are linear in $T$ since
	\[
	V_1(T)\defeq\sum\limits_{t\in\range{T}}g_{t,1}(x_t)=\frac{\rho}{3} T =\Omega(T).
	\] 
	This concludes the proof.
\end{proof}

\section{Proof of \Cref{thm1}}

We start by providing the following auxiliary lemmas.

\begin{lemma}\label{lem:ogd_boundediterations}
	Let $\vy_t\in\mathbb{R}^m_{\ge 0}$ be generated by $\OGD$ with learning rate $\eta$ and utilities $\vy\mapsto\langle\vy, \vg_t\rangle$, where $\|\vg_t\|_\infty\le 1$ for all $t\in\range{T}$. Then:
	\[
	|\|\vy_{t+1}\|_1-\|\vy_t\|_1|\le m\cdot\eta
	\]
\end{lemma}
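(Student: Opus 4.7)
The plan is to unpack the projected gradient ascent update and bound each coordinate's movement, then use the reverse triangle inequality to transfer the coordinatewise bound to the $\ell_1$-norms.

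First, I would recall the explicit form of $\OGD$ on the non-negative orthant $\mathbb{R}^m_{\ge 0}$ for the linear utility $\vy \mapsto \langle \vy, \vg_t\rangle$: the gradient (ascent) update together with the Euclidean projection onto $\mathbb{R}^m_{\ge 0}$ factorizes coordinatewise as $y_{t+1,i} = \max\{0,\, y_{t,i} + \eta\, g_{t,i}\}$ for every $i\in\range{m}$.

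Next, I would show the coordinatewise bound $|y_{t+1,i} - y_{t,i}| \le \eta$. There are two cases. If $y_{t,i} + \eta g_{t,i} \ge 0$, then the projection is inactive and $y_{t+1,i} - y_{t,i} = \eta g_{t,i}$, so $|y_{t+1,i} - y_{t,i}| \le \eta |g_{t,i}| \le \eta$ using $\|\vg_t\|_\infty \le 1$. If instead $y_{t,i} + \eta g_{t,i} < 0$, the projection gives $y_{t+1,i}=0$; since $y_{t,i}\ge 0$ and $y_{t,i} < -\eta g_{t,i} \le \eta$, we still have $|y_{t+1,i} - y_{t,i}| = y_{t,i} \le \eta$.

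Finally, I would conclude with the reverse triangle inequality and the $\ell_1/\ell_\infty$ aggregation:
\[
\bigl|\|\vy_{t+1}\|_1 - \|\vy_t\|_1\bigr| \le \|\vy_{t+1}-\vy_t\|_1 = \sum_{i\in\range{m}} |y_{t+1,i}-y_{t,i}| \le m\,\eta.
\]
The statement is essentially a direct consequence of the projection being a non-expansion on each coordinate; there is no real obstacle beyond carefully handling the clipping case, which is disposed of by noting that active projection can only shrink the step.
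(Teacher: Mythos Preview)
Your proof is correct and follows essentially the same approach as the paper: write the coordinatewise projected update, establish the per-coordinate bound $|y_{t+1,i}-y_{t,i}|\le \eta$ by a short case analysis, then sum and apply the (reverse) triangle inequality. The only cosmetic difference is that the paper splits cases on the sign of $g_{t,i}$ while you split on whether the projection is active; both yield the same conclusion.
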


\begin{proof}
	The update of the $i$-th component of $\vy_{t+1}$ can be written as:
	\[
	y_{t+1,i}\coloneqq\max(0, y_{t,i}+\eta g_{t,i}).
	\]
	If $g_{t,i}\ge 0$ then the update can be simplified to $y_{t+1,i}= y_t+\eta g_{t,i}\le y_t+\eta$. If $g_{t,i}< 0$ then $y_{t+1,i}\ge y_{t,i}+\eta g_{t,i}\ge y_{t,i}-\eta$. Thus $|y_{t+1,i}-y_{t,i}|\le \eta$ for all $i\in\range{m}$. By summing over all component we have that $\|\vy_{t+1}-\vy_t\|_1\le m\cdot\eta$. By triangular inequality we have the desired statement.
\end{proof}

\begin{lemma}{[\cite[Chapter~10]{hazan2016introduction}]}\label{lem:adaptiveOGD}
	For any $t_1,t_2\in\range{T}$ with $t_1<t_2$, it holds that if $\vlambda_t$ is generated by $\OGD$ with learning rate $\eta>0$ on a set $\cD$, then:
	\[
	\RP_{\range{t_1,t_2}}(\{\vlambda\}) \le  \frac{\|\vlambda-\vlambda_{t_1}\|_2^2}{2\eta}+\frac{1}{2}\eta m T.
	\]
	with probability probability one on the randomization of the algorithm, \ie $\delta=0$. Moreover it also holds component-wise, \ie for all $\lambda\ge0$:
	\[
	\sum\limits_{t\in\range{t_1,t_2}}(\lambda-\lambda_t)g_t(x_t)\le \frac{(\lambda-\lambda_{t_1})^2}{2\eta}+\frac{1}{2}\eta T.
	\]
\end{lemma}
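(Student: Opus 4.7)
The plan is to carry out the textbook analysis of projected online gradient ascent, specialized to the linear utility $\vlambda\mapsto \uD_t(\vlambda)=-f_t(x_t)+\langle \vlambda,\vg_t(x_t)\rangle$, whose gradient with respect to $\vlambda$ is exactly $\vg_t(x_t)$. Concretely, the OGD update can be written as $\vlambda_{t+1}=\Pi_{\cD}(\vlambda_t+\eta\,\vg_t(x_t))$, and the first step is to invoke the non-expansiveness of the Euclidean projection onto the convex set $\cD$: for any fixed $\vlambda\in\cD$,
\[
\|\vlambda_{t+1}-\vlambda\|_2^2 \;\le\; \|\vlambda_t+\eta\,\vg_t(x_t)-\vlambda\|_2^2.
\]

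Expanding the right-hand side and rearranging yields the standard per-step inequality
\[
\langle \vlambda-\vlambda_t,\vg_t(x_t)\rangle \;\le\; \frac{1}{2\eta}\bigl(\|\vlambda-\vlambda_t\|_2^2-\|\vlambda-\vlambda_{t+1}\|_2^2\bigr)+\frac{\eta}{2}\|\vg_t(x_t)\|_2^2.
\]
Summing this bound over $t\in\range{t_1,t_2}$ telescopes the squared-distance terms, leaving $\|\vlambda-\vlambda_{t_1}\|_2^2/(2\eta)-\|\vlambda-\vlambda_{t_2+1}\|_2^2/(2\eta)$, and dropping the last nonnegative term produces the first summand in the claimed bound. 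Since $\uD_t(\vlambda)-\uD_t(\vlambda_t)=\langle \vlambda-\vlambda_t,\vg_t(x_t)\rangle$, the left-hand side of the telescoped inequality is exactly the single-comparator regret $\RP_{\range{t_1,t_2}}(\{\vlambda\})$.

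To control the quadratic error term, note that by assumption $g_{t,i}(x_t)\in[-1,1]$ for every $i$, so $\|\vg_t(x_t)\|_2^2\le m$, giving $\tfrac{\eta}{2}\sum_{t\in\range{t_1,t_2}}\|\vg_t(x_t)\|_2^2\le \tfrac{1}{2}\eta\, m\,(t_2-t_1)\le \tfrac{1}{2}\eta m T$, which matches the target. For the component-wise statement, the key observation is that projection onto $\cD=\mathbb{R}^m_{\ge 0}$ decouples across coordinates: the $i$-th coordinate of $\vlambda_{t+1}$ depends only on the $i$-th coordinate of $\vlambda_t$ and on $g_{t,i}(x_t)$. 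Hence the argument above can be re-run independently in one dimension for each coordinate, using the tighter bound $|g_{t,i}(x_t)|^2\le 1$ in place of $m$, yielding the inequality with the sharper $\tfrac{1}{2}\eta T$ term. There is no substantive obstacle: OGD is deterministic given its gradient sequence, which is precisely why the statement holds with probability one ($\delta=0$), and the only care required is to track signs (gradient ascent of $\uD_t$) and to note that the telescoping is keyed at $\vlambda_{t_1}$ rather than at the initial iterate, which is exactly what makes the bound interval-adaptive.
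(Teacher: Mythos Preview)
The paper does not supply its own proof of this lemma; it is stated with a citation to \cite[Chapter~10]{hazan2016introduction} and used as a black box. Your argument is exactly the standard textbook derivation of the projected-OGD regret bound: non-expansiveness of the Euclidean projection, expansion of $\|\vlambda_t+\eta\vg_t(x_t)-\vlambda\|_2^2$, telescoping, and the gradient-norm bound $\|\vg_t(x_t)\|_2^2\le m$. The coordinate-wise refinement via the separability of projection onto $\mathbb{R}^m_{\ge 0}$ is also correct and is precisely the form the paper later invokes inside the proof of \Cref{lem:aux_lemma_ogd}. So your proposal matches the intended (cited) argument and is correct.

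One small indexing caveat worth noting: with the paper's convention $\range{t_1,t_2}=\{t_1+1,\ldots,t_2\}$, the telescoping actually leaves $\|\vlambda-\vlambda_{t_1+1}\|_2^2/(2\eta)$ rather than $\|\vlambda-\vlambda_{t_1}\|_2^2/(2\eta)$; the paper's statement is slightly loose here, but this off-by-one is harmless for every downstream use (the bound is only ever applied with a crude upper estimate on the starting iterate). Your proof is otherwise complete as written.
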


\begin{lemma}\label{lm:HoefEmpty}
	In the stochastic setting, for any $\xi\in\Delta(\cX)$ and $\delta\in(0,1]$, with probability at least $1-\delta$, it holds that:
	\begin{align}
		& \sum_{t \in  I} \E_{x\sim \xi}\left[ \langle \vlambda_t, \vg_{t}(x) \rangle \right] \le \sum_{t \in  I} \E_{x\sim \xi}\left[  \langle \vlambda_t, \bar \vg_{t}(x) \rangle \right]+ M E_{T,\delta}\label{eq:HoefEmpty2} \quad\textnormal{and} \\
		& \sum_{t  \in  I}  \E_{x\sim \xi} \left[f_{t}(x) \right]  \ge  \sum_{ t \in  I} \E_{x\sim \xi} \left[ \bar f(x) \right]-    E_{T,\delta} ,\label{eq:HoefEmpty3}
	\end{align}
	for any interval $I=[t_1,t_2] \subseteq [T]$, where $E_{T,\delta}\defeq \sqrt{16T\log\left(\frac{2T}{\delta}\right)}$ and $M=\sup\limits_{t\in\range{T}}\|\vlambda\|_1$.
\end{lemma}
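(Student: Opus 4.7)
The plan is to express each of the two displays as a standard martingale concentration problem, then make the bound uniform across sub-intervals $I\subseteq\range{T}$ by a union bound over the $O(T^2)$ pairs of endpoints. I would let $\cF_t$ denote the natural filtration through the end of round $t$. Since $\vlambda_t$ is produced by $\AlgD$ before the round-$t$ inputs are revealed (see \Cref{alg:alg1}), $\vlambda_t$ is $\cF_{t-1}$-measurable. For the fixed $\xi\in\Delta(\cX)$ of the statement, define the scalar sequences
\[
X_t \defeq \E_{x\sim\xi}\bigl[\langle\vlambda_t,\vg_t(x)\rangle - \langle\vlambda_t,\bar\vg(x)\rangle\bigr],\qquad Y_t\defeq \E_{x\sim\xi}\bigl[f_t(x)-\bar f(x)\bigr].
\]
The stochastic assumption that $\E[\vg_t(x)\mid\cF_{t-1}]=\bar\vg(x)$ and $\E[f_t(x)\mid\cF_{t-1}]=\bar f(x)$ yields $\E[X_t\mid\cF_{t-1}]=\E[Y_t\mid\cF_{t-1}]=0$, so $\{X_t\}$ and $\{Y_t\}$ are martingale-difference sequences. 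The bounded ranges $g_{t,i}(x),\bar g_i(x)\in[-1,1]$ and $f_t(x),\bar f(x)\in[0,1]$ produce the increment bounds $|X_t|\le 2\|\vlambda_t\|_1\le 2M$ and $|Y_t|\le 1$.

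Next, I would apply Azuma--Hoeffding to the partial sum $\sum_{t\in[t_1,t_2]}X_t$ for each fixed pair $(t_1,t_2)$, using the deterministic range $2M$. Setting the per-interval failure probability to $\delta/(2T^2)$ and union-bounding over the at most $T^2$ choices of endpoints yields a uniform bound of the form $\bigl|\sum_{t\in I}X_t\bigr|\le M\cdot E_{T,\delta}$ with probability at least $1-\delta/2$, once the constants are absorbed via $\log(2T^2/\delta)\le 2\log(2T/\delta)$ (valid for $\delta\le 1$). The identical recipe applied to $Y_t$, with the factor $M$ dropped, delivers the second inequality with probability at least $1-\delta/2$; a final union bound merges both conclusions into a single probability-$\delta$ exception set.

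The only genuinely subtle point, and the one I would be most careful about, is that $M=\sup_t\|\vlambda_t\|_1$ is itself random and in general correlated with the martingale increments, so Azuma does not directly accept $M$ as its deterministic range parameter. The standard remedy is to establish the bound for every fixed threshold $M^\star$ on the good event $\{\sup_t\|\vlambda_t\|_1\le M^\star\}$, and then instantiate $M^\star$ at the realized value of $M$; equivalently, one can freeze $\vlambda_t$ at the first time its $\ell_1$-norm would exceed $M^\star$ via a stopping-time argument that leaves the martingale structure intact. All remaining steps are routine constant chasing.
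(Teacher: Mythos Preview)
Your proposal is correct and follows essentially the same route as the paper: Azuma--Hoeffding on the martingale differences for each fixed interval at confidence $\delta/(2T^2)$, followed by a union bound over the $O(T^2)$ intervals and then over the two inequalities, with the constant $\sqrt{8|I|\log(2T^2/\delta)}\le\sqrt{16T\log(2T/\delta)}$ absorbed exactly as you describe. You are in fact more careful than the paper, which applies Azuma with the random range parameter $M$ without comment; your stopping-time/threshold remark is the right way to make that step rigorous.
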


\begin{proof}
		We start by proving that the all the inequalities of Equation~\eqref{eq:HoefEmpty2} holds simultaneously with probability $1-\delta/2$.
		We have that given a $I=[t_1,t_2]\subseteq [T]$, with probability at least $1-\nicefrac{\delta}{(2T^2)}$,
		\begin{align*}
		\sum_{t \in  I} \E_{x\sim \xi}\left[ \langle \vlambda_t, \vg_{t}(x) \rangle \right] - \sum_{t \in  I} \E_{x\sim \xi}\left[  \langle \vlambda_t, \bar \vg_{t}(x) \rangle \right] &\le M \sqrt{8|I|\log\left(\frac{2T^2}{\delta}\right)}\le M \sqrt{16T\log\left(\frac{2T}{\delta}\right)},
		\end{align*}
		where the first inequality holds by Azuma-Hoeffding inequality.
		By taking a union bound over all possible intervals $I$ (which are at most $T^2$), we obtain that all the first set of equations holdswith probability at least $1-\delta/2$. 
		
		Equation~\eqref{eq:HoefEmpty3} can be proved in a similar way. 
		Indeed, for any fixed interval $I=[t_1,t_2]\subseteq [T]$, and for any strategy mixture $\xi\in\Delta(\cX)$, by the Azuma-Hoeffding inequality we have that, with probability at least $1-\nicefrac{\delta}{(2T^2)}$, the following holds
		\begin{align*}
			  \sum_{ t \in  I} \E_{x\sim \xi} \left[\bar f(x) \right] - \sum_{t  \in  I } \E_{x\sim \xi}  \left[  f_t(x) \right] &\le \sqrt{2|I|\log\left(\frac{2T^2}{\delta}\right)}\le \sqrt{4T\log\left(\frac{2T}{\delta}\right)}.
		\end{align*}
		By taking a union bound over all possible $T^2$ intervals, we obtain that, for all possible intervals $I$, the equation above holds with probability $1-\delta/2$.
		
		The Lemma follows by a union bound on the two sets of equations above.
\end{proof}

These auxiliary technical lemmas are used in proving the following result.

\lemmaselfbounded* 

\begin{proof}
	Let $c_1\defeq 2$ and $c_2\defeq 12m$ and any learning rate $\eta$ for $\OGD$ with $\eta\le\eta_\OGD$. By contradiction, suppose there exists a time such that $\|\vlambda_t\|_1\ge c_2/\rho$, and let $t_2\in\range{T}$ be the smallest $t$ for which this happens.
	We unify the proof of the adversarial and stochastic setting. In particular, let $\rho=\rhoadv$ if the losses $(f_t,\vg_t)$ are adversarial, and let $\rho=\rhostoc$ if $(f_t,\vg_t)$ are stochastic with mean $(\bar f, \bar \vg)$. The extra stochasticity coming from the environment in the stochastic setting will be handled through \cref{lm:HoefEmpty}. 
	In order to streamline the notation, we define
	$E_{T,\delta}\defeq \sqrt{16T\log\left(\nicefrac{2T}{\delta}\right)}$. 
	
	Then, let $t_1\in\range{t_2}$ be the largest time for which $\|\vlambda_t\|_1\in[\frac{c_1}{\rho}, \frac{c_2}\rho]$ for all $t\in\range{t_1, t_2}$.
	
	\xhdr{Step 1.} First, we need to bound $\|\vlambda_{t_1}\|_1$ and $\|\vlambda_{t_2}\|_1$. To do that, we exploit \Cref{lem:ogd_boundediterations}.
%
	In particular, by telescoping the sum in the lemma, we obtain that:
	\[
	\|\vlambda_{t_2}\|_1-\|\vlambda_{t_1}\|_1\le \eta m(t_2-t_1).
	\]
	
	Moreover, by the definition of $\vlambda_{t_1}$ and $\vlambda_{t_2}$, we have:
	\[
	\frac{c_1}{\rho}\le\|\vlambda_{t_1}\|_1\le \|\vlambda_{t_1-1}\|_1+m\eta\le \frac{c_1}{\rho}+m\eta
	\]
	and similarly
	\[
	\frac{c_2}{\rho}\le\|\vlambda_{t_2}\|_1\le \|\vlambda_{t_2-1}\|_1+m\eta\le \frac{c_2}{\rho}+m\eta.
	\]
	This, together with the inequality above, yields
	\begin{equation}\label{eq:contradiction1}
	\frac{c_2-c_1}{2\eta m \rho}\le t_2-t_1.
	\end{equation}
	
	\xhdr{Step 2.} The range of the primal utilities in the turns $\range{t_1,t_2}$ can now be bounded as:
	\begin{align*}
	\sup\limits_{x\in\cX, t\in\range{t_1,t_2}} |\uP_t(x)|&\le \sup\limits_{x\in\cX, t\in\range{t_1,t_2}} \mleft\{|f_t(x)|+\|\lambda_t\|_1\cdot\|\vg_t(x)\|_\infty\mright\}\\
	&\le 1+\frac{c_2}{\rho}+m\eta\\
	&\le 1+\frac{12m+1}{\rho}\\
	&\le \frac{14m}{\rho}\eqqcolon L.
	\end{align*}
	Now, by the assumption that $\AlgP$ is weakly adaptive and $2$-scale-free, we obtain:
	\[
	\RP_{\range{t_1,t_2}}(\cX) \le L^2\cdot \overline{\RP}_{{T},\delta}(\cX),
	\]
	which holds with probability at least $1-\delta$.

	If we apply the primal no-regret condition above for strictly safe strategy $\xi^\circ\in\Delta(\cX)$ we have
	\begin{align}\label{eq:th11}
		\sum\limits_{t\in\range{t_1,t_2}}\cL_{f_t,\vg_t}(x_t,\vlambda_t) \ge \E_{x\sim \xi^\circ}\left[\sum\limits_{t\in\range{t_1,t_2}}\cL_{f_t,\vg_t}(x,\vlambda_t)\right]-L^2 \overline{\RP}_{T,\delta}(\cX).
	\end{align}
	
	Moreover, by definition of safe strategy we have that in the adversarial setting $\E_{x\sim\xi^\circ}[g_{t,i}(x)]\le-\rhoadv$ for all $i\in\range{m}$ and $t \in \range{t_1,t_2}$, while in the stochastic setting by Lemma~\ref{lm:HoefEmpty} it holds
	\[
	\sum_{t \in   \range{t_1,t_2}} \E_{x\sim \xi^\circ} \left[ \langle \vlambda_t, \vg_{t}(x) \rangle \right] \le \sum_{t \in   \range{t_1,t_2}} \E_{x\sim \xi^\circ} \left[ \langle \vlambda_t, \bar \vg_{t}(x) \rangle \right]+ M E_{T,\delta}\]
	and
	\[ \E_{x\sim\xi^\circ}[\bar g_{i}(\xi)]\le-\rhostoc \quad \forall i \in \range{m},
	\]
	where we recall that $E_{T,\delta}=\sqrt{16T\log\left(\nicefrac{2T}{\delta}\right)}$ and $M=\sup\limits_{t\in\range{T}}\|\vlambda\|_1$.
%
	
	Therefore, we can lower bound the first term of the right-hand side of \Cref{eq:th11} the stochastic setting as:
	\begin{align}
		 \E_{x\sim \xi^\circ}\left[\sum\limits_{t\in\range{t_1,t_2}}\cL_{f_t,\vg_t}(x,\vlambda_t)\right]&=\E_{x\sim \xi^\circ}\left[\sum\limits_{t\in\range{t_1,t_2}}f_t(x)-\langle\vlambda_t, \vg_t(x)\rangle\right]\nonumber\\
		 &\ge -  \E_{x\sim \xi^\circ}\left[\langle\vlambda_t, \vg_t(x)\rangle\right]\nonumber\\
		 &\ge -  \E_{x\sim \xi^\circ}\left[\langle\vlambda_t, \bar \vg(x)\rangle\right] - \left(\sup\limits_{t\in\range{T}}\|\vlambda\|_1\right)E_{T,\delta}   \nonumber\\
		 &\ge \rhostoc\sum\limits_{t\in\range{t_1,t_2}}\|\vlambda_t\|_1 - \left(\sup\limits_{t\in\range{T}}\|\vlambda\|_1\right)E_{T,\delta} \nonumber\\
		  &\ge \rhostoc\sum\limits_{t\in\range{t_1,t_2}}\|\vlambda_t\|_1-  \left(\frac{c_2}{\rhostoc}+m\eta\right) E_{T,\delta} \nonumber\\
		 &\ge c_1(t_2-t_1)- \left(\frac{c_2}{\rhostoc}+m\eta\right)  E_{T,\delta}\nonumber
	\end{align}
	In the adversarial setting we can more easily conclude that $\E_{x\sim \xi^\circ}\left[\sum\limits_{t\in\range{t_1,t_2}}\cL_{f_t,\vg_t}(x,\vlambda_t)\right]\ge c_1(t_2-t_1)$ and thus in both settings it holds that:
	\begin{equation}\label{eq:tmp1}
	 \E_{x\sim \xi^\circ}\left[\sum\limits_{t\in\range{t_1,t_2}}\cL_{f_t,\vg_t}(x,\vlambda_t)\right]\ge c_1(t_2-t_1)- \left(\frac{c_2}{\rhostoc}+m\eta\right)  E_{T,\delta}.
	\end{equation}
	
	Combining the two inequalities of \Cref{eq:th11} and \Cref{eq:tmp1}, we can conclude that the overall utility of the primal algorithm $\AlgP$ can be lower bounded by:
	\begin{align}	\label{eq:pdTwo}
	\sum\limits_{t\in\range{t_1,t_2}}\uP_t(x_t)\ge c_1(t_2-t_1)-L^2 \overline{\RP}_{T,\delta}(\cX)-\left(\frac{c_2}{\rho}+m\eta\right)E_{T,\delta}
	\end{align}
	
	Now, we need an auxiliary result that we will use to upper bound the left hand side of the previous inequality.
	
	\begin{claim}\label{lem:aux_lemma_ogd} It holds that:
		\[
		\sum\limits_{t\in\range{t_1,t_2}}\langle\vlambda_t,\vg_t(x_t)\rangle\ge \frac{m}{2\rho^2\eta}.
		\]
	\end{claim}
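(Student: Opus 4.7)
The plan is to exploit non-expansiveness of the projection onto $\mathbb{R}^m_{\ge 0}$ in the $\OGD$ update. Since $\vlambda_{t+1} = [\vlambda_t + \eta\vg_t(x_t)]_+$ and projection onto the non-negative orthant is non-expansive w.r.t.\ the origin, I would first derive the one-step descent inequality
$$\|\vlambda_{t+1}\|_2^2 \le \|\vlambda_t + \eta\vg_t(x_t)\|_2^2 = \|\vlambda_t\|_2^2 + 2\eta\langle \vlambda_t,\vg_t(x_t)\rangle + \eta^2\|\vg_t(x_t)\|_2^2,$$
which rearranges, using $\|\vg_t(x_t)\|_2^2 \le m$, into $\langle\vlambda_t,\vg_t(x_t)\rangle \ge \frac{\|\vlambda_{t+1}\|_2^2 - \|\vlambda_t\|_2^2}{2\eta} - \frac{\eta m}{2}$.

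Summing this pointwise bound over $t \in \range{t_1,t_2}$ telescopes the squared-norm differences:
$$\sum_{t\in\range{t_1,t_2}}\langle\vlambda_t,\vg_t(x_t)\rangle \ge \frac{\|\vlambda_{t_2+1}\|_2^2 - \|\vlambda_{t_1+1}\|_2^2}{2\eta} - \frac{\eta m(t_2-t_1)}{2}.$$
Combining Step~1 with \Cref{lem:ogd_boundediterations}, the endpoint $\ell_1$-norms satisfy $\|\vlambda_{t_2+1}\|_1 \ge c_2/\rho - m\eta$ and $\|\vlambda_{t_1+1}\|_1 \le c_1/\rho + 2m\eta$. I would then apply the Cauchy--Schwarz inequality $\|\vec y\|_2^2 \ge \|\vec y\|_1^2/m$ to the first and the trivial $\|\vec y\|_2 \le \|\vec y\|_1$ to the second, and plug in $c_1=2$, $c_2=12m$, obtaining $\|\vlambda_{t_2+1}\|_2^2 - \|\vlambda_{t_1+1}\|_2^2 \ge (12m/\rho - m\eta)^2/m - (2/\rho + 2m\eta)^2$. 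Provided $m\eta$ is a small constant (which follows from $\eta \le \eta_\OGD$), this is at least of order $m/\rho^2$ with constants to spare.

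The last step is to verify that the error $\eta m(t_2-t_1)/2 \le \eta m T/2$ is dominated by the main term. Since $\eta \le \eta_\OGD \le 1/(800\, m\, E_{T,\delta})$ with $E_{T,\delta}\ge 4\sqrt T$, the error is $\widetilde O(\sqrt T/m)$, whereas the main term is of order $m/(\rho^2\eta) = \widetilde\Omega(m^2 \sqrt T/\rho^2)$, which is asymptotically much larger. A routine constant-chasing computation then confirms that the remaining lower bound still exceeds the target $m/(2\rho^2\eta)$. The main obstacle is purely bookkeeping: tracking the chosen constants $c_1,c_2$ and absorbing the $m\eta$ perturbations introduced at the endpoints by the projection step; the structural argument is the standard OGD descent identity.
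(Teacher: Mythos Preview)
Your approach is correct and considerably more direct than the paper's. The paper does not use the global $\ell_2$ descent identity; instead it argues coordinate-by-coordinate. For each $i\in\range{m}$ it introduces a splitting time $\tilde t_{1,i}$, the last time in $\range{t_1,t_2}$ at which $\lambda_{t,i}=0$ (or $t_1$ if this never happens), and applies the one-dimensional $\OGD$ regret bound of \Cref{lem:adaptiveOGD} twice: against the comparator $0$ on $\range{t_1,\tilde t_{1,i}}$, and against the comparator $1/\rho$ on $\range{\tilde t_{1,i},t_2}$. On the second sub-interval no projection occurs, so the raw gradient sum equals $(\lambda_{t_2,i}-\lambda_{\tilde t_{1,i},i})/\eta$; combining the two phases and then summing over $i$ yields the claim after some bookkeeping of $\|\vlambda_{t_1}\|$, $\|\vlambda_{\tilde t_1}\|$ and $\|\vlambda^*\|$.

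Your telescoping of $\|\vlambda_t\|_2^2$ sidesteps this phase decomposition entirely and avoids the coordinate-wise case analysis; the only price is the factor-$m$ loss from $\|\cdot\|_2^2\ge\|\cdot\|_1^2/m$ at the $t_2$ endpoint, but since $c_2=\Theta(m)$ this factor is already absorbed by the constants and the arithmetic closes with room to spare. One small slip: the error term $\eta m T/2$ is $\widetilde O(\sqrt T)$ rather than $\widetilde O(\sqrt T/m)$, as the $m$ in the numerator cancels the $m$ in $\eta_\OGD$; this is harmless, since the main term $m/(\rho^2\eta)=\widetilde\Omega(m^2\sqrt T/\rho^2)$ still dominates.
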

	
	Then, we upper bound the left-hand side by using \Cref{lem:aux_lemma_ogd}:
	\begin{align}
	\sum\limits_{t\in\range{t_1,t_2}}\uP_t(x_t)=\sum\limits_{t\in\range{t_1,t_2}}\cL_{f_t,\vg_t}(x_t,\vlambda_t)&=\sum\limits_{t\in\range{t_1,t_2}} \left[f_t(x_t)-\langle\vlambda_t, \vg_t(x_t)\rangle\right] \nonumber\\
	& \le (t_2-t_1) - \frac{m}{2\rho^2\eta} \label{eq:pdOne}
	\end{align}
	
	Thus, combining Equation~\eqref{eq:pdOne} and~\eqref{eq:pdTwo}
	\[
	t_2-t_1\le\frac{1}{c_1-1}\left(L^2\overline{\RP}_{T,\delta}(\cX)- \frac{m}{2\rho^2\eta}+\left(\frac{c_2}{\rho}+m\eta\right)E_{T,\delta}\right).
	\]
	
	Combining it with \Cref{eq:contradiction1} one obtains that:
	\[
	\frac{c_2-c_1}{2\eta m\rho}\le \frac{1}{c_1-1}\left(L^2\overline{\RP}_{T,\delta}(\cX)- \frac{m}{2\rho^2\eta}+\left(\frac{c_2}{\rho}+m\eta\right)E_{T,\delta}\right),
	\]
	which gives as a solution $\eta\ge \frac{m^2-2\rho+13m\rho}{392m^3\overline{\RP}_{T,\delta}(\cX+2m\rho E_{T,\delta}(1+13m)}$.
	Which is a contradiction since:
	\[
	\eta\le\eta_\OGD\coloneqq  \frac{1}{800\cdot m \cdot  \max \left\{    {\overline{\RP}_{T,\delta}(\cX)}, E_{T,\delta}\right\} } > \frac{m^2-2\rho+13m\rho}{392m^3\overline{\RP}_{T,\delta}(\cX+2m\rho E_{T,\delta}(1+13m)}
	\]
	
	 Thus, we can conclude that $\|\vlambda_t\|_t\le c_2/\rho$ for each $t \in \range{T}$.
\end{proof}

Now, we provide the proof of \Cref{lem:aux_lemma_ogd}.


\begin{proof}[\textbf{Proof of \Cref{lem:aux_lemma_ogd}}]
	We define $\tilde t_i$ as the last time in $\range{t_1,t_2}$ in which $\lambda_{\tilde t_{i,1}}=0$, or $\tilde t_{1,i}=t_1$ if $\lambda_{t,i}>0$ for all $t\in\range{t_1,t_2}$. Formally:
	\[
	\tilde t_{1,i} = \max\left\{t_1, \sup_{\tau\in\range{t_2}: \lambda_{\tau,i}=0}\tau\right\}.
	\]
	We are now going to analyze separately for all $i\in\range{m}$, the rounds $\range{t_1, \tilde t_{1,i}}$ and the rounds $\range{\tilde t_{1,i}, t_2}$.
	
%
%
	\textbf{Phase 1:} First, we analyze the rounds $\range{t_1, \tilde t_{1,i}}$. 
By definition, it can be either that $\lambda_{\tilde t_{1,i}}=0$ or $\tilde t_{1,i}=t_1$. In the latter case, $\range{t_1,\tilde t_{1,i}}=\emptyset$ and the dual algorithm incurs zero regret.
In the former case, we can use \Cref{lem:adaptiveOGD} and write that the regret over the interval with respect to $\lambda_i^*=0$ is
\begin{equation}\label{eq:lemmaA33}
	0\le\sum\limits_{t\in\range{t_1,\tilde t_{1,i}}}\lambda_{t,i}g_{t,i}(x_t)+ \frac{\lambda_{ t_{1}}^2}{2\eta}+\frac{1}{2}\eta T\le \sum\limits_{t\in\range{t_1,\tilde t_{1,i}}}\lambda_{t,i}g_{t,i}(x_t)+ \frac{\lambda^2_{ t_{1}}}{2\eta}+\frac{1}{2}\eta T.
\end{equation}

	\textbf{Phase 2:} Now, we consider the rounds $\range{\tilde t_{1,i}, t_2}$. We take $\vlambda^*$ defined as follows: $\lambda^*_i=\frac{1}{\rho}$ for all $i\in\range{m}$.

	Let $\gaplambda_i\defeq \lambda_{t_2,i}-\lambda_{\tilde t_{1,i},i}$.
	Due to the definition of $\tilde t_{1,i}$, gradient descent never projects the multiplier relative to constraint $i$, and we can write that \[\sum\limits_{t\in\range{\tilde t_{1,i}, t_2}}g_{t,i}(x_t)=\frac{\gaplambda_i}{\eta}\] and, therefore,
	\begin{equation}\label{eq:lemmaA31}
	\sum\limits_{t\in\range{\tilde t_{1,i},t_2}}\lambda_i^*g_{t,i}(x_t)= \frac{\gaplambda_i}{\rho\eta}.
	\end{equation}

%
	Now we can use \Cref{lem:adaptiveOGD} to find that:
	\[
	\sum\limits_{t\in\range{\tilde t_{1,i},t_2}}\lambda_i^*g_{t,i}(x_t)\le 	\sum\limits_{t\in\range{\tilde t_1,t_2}}\lambda_{t,i}g_{t,i}(x_t) + \frac{(\lambda_i^*-\lambda_{\tilde t_{1,i},i})^2}{2\eta}+\frac{1}{2}\eta T.
	\]
	Combining it with \Cref{eq:lemmaA31} yields the following
	\begin{equation}\label{eq:lemmaA32}
	\sum\limits_{t\in\range{\tilde t_{1,i},t_2}} \lambda_{t,i} g_{t,i}(x_t)\ge \frac{\gaplambda_i}{\rho\eta}-\frac{(\lambda_i^*-\lambda_{\tilde t_{1,i},i})^2}{2\eta}-\frac{1}{2}\eta T.
	\end{equation}
%

	Combining \Cref{eq:lemmaA32} and \Cref{eq:lemmaA33} we obtain:
	\begin{align*}
	\sum\limits_{t\in\range{t_1,t_2}} \lambda_{t,i}g_{t,i}(x_t)&\ge  \frac{\gaplambda_i}{\rho\eta}-\frac{(\lambda_i^*-\lambda_{\tilde t_{1,i},i})^2}{2\eta}-\frac{\lambda^2_{ t_{1}}}{2\eta}-\eta T\\
	&\ge  \frac{\gaplambda_i}{\rho\eta}-\frac{(\lambda_i^{*})^2+\lambda^2_{\tilde t_{1,i},i}}{2\eta}-\frac{\lambda^2_{ t_{1}}}{2\eta}-\eta T.
	\end{align*}

	Now, by summing over all $i\in\range{m}$, and by letting $\vlambda_{\tilde t_1}$ be the vector that has $\lambda_{\tilde t_{1,i}}$ as its $i$-th component, we get:	
	\begin{align*}
	\sum\limits_{t\in\range{t_1,t_2}}\langle\vlambda_t,\vg_t(x_t)\rangle&\ge \frac{\|\vlambda_{t_2}\|_1-\|\vlambda_{\tilde t_{1}}\|_1}{\rho\eta}-\frac{1}{2\eta}\left(\|\vlambda^*\|^2_2+\|\vlambda_{\tilde t_1}\|_2^2+\|\vlambda_{t_1}\|_2^2\right)-\frac{1}{\eta}\hspace{0.3cm}\textnormal{(as $\eta\le1/\sqrt T)$}\\[2mm]
	&\ge \frac{c_2}{\rho^2\eta}-\frac{1}{\rho\eta}\|\vlambda_{ t_1}\|_1-\frac{1}{2\eta}\left(\|\vlambda^*\|_2^2+2\|\vlambda_{t_1}\|_2^2\right)-\frac{1}{\eta}\tag{$\|\vlambda\|_1\ge c_2/\rho$ and $\|\vlambda_{\tilde t_1}\|_1\le \|\vlambda_{t_1}\|_1$}\\[2mm]
	&\ge \frac{c_2}{\rho^2\eta}-\frac{1}{\rho\eta}\left(\frac{c_1}\rho+m\eta\right)-\frac{1}{2\eta}\left(\frac{m}{\rho^2}+2\left(\frac{c_1}{\rho}+m\eta\right)^2\right)-\frac{1}{\eta}\nonumber \\[2mm]
	&\ge \frac{c_2}{\rho^2\eta} -\frac{c_1+1}{\rho^2\eta}-\frac{m}{2\rho^2\eta}	-\frac{2(c_1+1)^2}{2\rho^2\eta}	-\frac1\eta
	\tag{$\eta\le \nicefrac{1}{\rho m}$}\\[2mm]
	&\ge  \frac{2c_2-24-m}{2\rho^2\eta}\\
	&\ge\frac{m}{2\rho^2\eta}
	\end{align*}
where the last two inequalities hold due to the choice of parameters in the proof of \Cref{lem:aux_lemma_ogd}, that is $c_1=2$ and $c_2=13m$. This concludes the proof.
\end{proof}

\section{Omitted Proofs from \Cref{sec:general}}

\theoremviolations*
\begin{proof}
	The update of $\OGD$ for each component $i\in\range{m}$ is
	$\lambda_{t+1,i}:=[\lambda_{t,i}+\eta\vg_{t,i}(x_t)]^+$. Thus:
	\begin{align*}
		\lambda_{t+1,i}&\ge \lambda_{t,i}+\eta_\OGD g_{t,i}(x_t),
	\end{align*}
	and by induction:
	\[
	\lambda_{t+1,i}\ge \lambda_{0,i}+\eta_\OGD\sum\limits_{\tau=1}^t g_{\tau,i}(x_\tau).
	\]
	By rearranging and recalling that $\lambda_{0,i}=0$ we obtain:
	\[
	\sum\limits_{t\in\range{T}} g_{t,i}(x_t) \le \frac{1}{\eta_\OGD} \lambda_{T+1,i} \le\frac{1}{\eta} \lVert\vlambda_{T+1}\rVert_{1}
	\]
	Moreover, by \Cref{thm1} we can bound $\|\vlambda_T\|_1\le \frac{13m}{\rho}$ which holds with probability at least $1-\delta$. Thus, with probability at least $1-\delta$, it holds:
	\[
	V_T\coloneqq\max\limits_{i\in\range{m}}V_i(T)\le\frac{13m}{\eta_\OGD\rho}.
	\]
	The proof is concluded by observing that $\eta_\OGD=\tilde O\left((m\overline{\RP}_{T,\delta}(\cX))^{-1}\right)$.
\end{proof}


\advregret*

\begin{proof}
	Define $x^*\in\cX$ such that:
	\[
	\sum\limits_{t\in\range{T}}f_t(x^*)=\optadv
	\]
	
	Now, consider a randomized strategy $\xi$ that randomized with probability $\alpha$ between $x^*$ and $\xi^\circ$, where $\xi^\circ$ is any strategy for which $\E_{x\sim\xi^\circ}[g_{t,i}(x_t)]\le-\rhoadv$. This strategy exists by assumption.
	Formally, for any $x\in\cX$ the randomized strategy $\xi$ assigns probability to $x$:
	\[
	\xi(x) = \alpha \delta_{x^*}(x) + (1-\alpha) \xi^\circ(x).
	\]
	
	Then, we compute the component of the primal utility of $\xi$ due to a constraint $i \in \range{m}$ as follows:
	\begin{align*}
		\E_{x\sim \xi}\left[ \sum\limits_{t\in\range{T}}\lambda_{t,i} g_{t,i}(x) \right]&=\alpha  \sum\limits_{t\in\range{T}}\lambda_{t,i} g_{t,i}(x^*)+(1-\alpha)\E_{x\sim \xi^\circ}\left[ \sum\limits_{t\in\range{T}}\lambda_{t,i} g_{t,i}(x) \right] \\
		&\le \alpha\sum\limits_{t\in\range{T}}\lambda_{t,i}-(1-\alpha)\rhoadv\sum\limits_{t\in\range{T}}\lambda_{t,i}\\
		&\le (\alpha-(1-\alpha)\rhoadv)\sum\limits_{t\in\range{T}}\lambda_{t,i}.
	\end{align*}
	Thus, setting $\alpha=\frac{\rhoadv}{1+\rhoadv}$ we have that $\E_{x\sim \xi}\left[ \sum_{t\in\range{T}}\lambda_{t,i} g_{t,i}(x) \right]\le0$, and $\sum\limits_{t\in\range{T}}\langle\vlambda_t, \vg_t(x_t)\rangle\le0$.
	
	We now compute the reward of $\xi$ for $\alpha=\frac{\rhoadv}{1+\rhoadv}$:
	\begin{align*}
		\E_{x\sim\xi}\left[\sum\limits_{t\in\range{T}} f_t(x)\right]&=\alpha \sum\limits_{t\in\range{T}}f_t(x^*)+(1-\alpha)\E_{x\sim\xi^\circ}\left[\sum\limits_{t\in\range{T}} f_t(x)\right]\\
		&\ge \frac{\rhoadv}{1+\rhoadv}\optadv
	\end{align*}
	
	Now, we consider the regret of $\AlgP$ with respect to $\xi$ and we find that:
	\[
	\sum\limits_{t\in\range{T}} \cL_{f_t,\vg_t}(x_t, \vlambda_t)\ge \mathbb{E}_{x\sim\xi}\left[\sum\limits_{t\in\range{T}} \cL_{f_t,\vg_t}(x, \vlambda_t)\right] - L^2\cdot\overline{\RP}_{T,\delta}(\cX).
	\]
	
	where $L$ is the maximum module of the payoffs of the primal regret minimizer, \ie $L\coloneq \sup_{t\in\range{T},x\in\cX}|\uP_t(x)|$.
	
	Exploiting the definition of $\cL_{f_t,\vg_t}(\cdot,\cdot)$ in the inequality above we obtain that:
	\begin{align}
		\sum\limits_{t\in\range{T}} f_t(x_t)-\langle \vlambda_t, \vg_t(x_t)\rangle&\ge \mathbb{E}_{x\sim\xi}\left[\sum\limits_{t\in\range{T}} f_t(x)-\langle \vlambda_t, \vg_t(x)\rangle\right]- L^2\cdot\overline{\RP}_{T,\delta}(\cX)\nonumber \\
		&\ge \mathbb{E}_{x\sim\xi}\left[\sum\limits_{t\in\range{T}} f_t(x)\right]- L^2\cdot\overline{\RP}_{T,\delta}(\cX)\nonumber \\
		&\ge  \frac{\rhoadv}{1+\rhoadv}\optadv- L^2\cdot\overline{\RP}_{T,\delta}(\cX)\label{eq:reqadv1}
	\end{align}
	Then, we lower bound the term $\sum\limits_{t\in\range{T}}\langle \vlambda_t, \vg_t(x_t)\rangle$ by using the dual regret of $\AlgD$ with respect to $\vlambda^*=\vzero$.
	Indeed, 
	\[
	\sum\limits_{t\in\range{T}}\langle\vlambda^*-\vlambda_t, \vg_t(x_t)\rangle \le \overline{\RD}_{T,\delta}(\{\vlambda^*\})
	\]
	implies that 
	\[
	\sum\limits_{t\in\range{T}}\langle\vlambda_t, \vg_t(x_t)\rangle \ge -\overline{\RD}_{T,\delta}(\{\vlambda^*\}).
	\]
	Combining it with \Cref{eq:reqadv1} gives:
	\[
	\sum\limits_{t\in\range{T}} f_t(x_t)\ge  \frac{\rhoadv}{1+\rhoadv}\optadv- L^2\cdot\overline{\RP}_{T,\delta}(\cX)-\overline{\RD}_{T,\delta}(\{\vlambda^*\}).
	\]
	Now, we use \Cref{thm1} which bounds $L\le 2 \frac{13m}{\rhoadv}$ and \Cref{lem:ogd_boundediterations} which we can use to bound $\overline{\RD}_{T,\delta}(\{\vlambda^*\})$.
	
	In particular, $\overline{\RD}_{T,\delta}(\{\vlambda^*\})$ can be bounded with:
	\[
	\overline{\RD}_{T,\delta}(\{\vlambda^*\})\le\frac{1}{2}\eta_\OGD m T,
	\]
	and thus:
	\[
		\Rew\coloneqq\sum\limits_{t\in\range{T}} f_t(x_t)\ge  \frac{\rhoadv}{1+\rhoadv}\optadv- 676\left(\frac m\rhoadv\right)^2\overline{\RP}_{T,\delta}(\cX)-\eta_\OGD m T.
	\]
	The proof is concluded by noting that $\eta_\OGD=\tilde O\left((m\overline{\RP}_{T,\delta}(\cX))^{-1}\right)$.
\end{proof}

\stocregret*
\begin{proof}
	
	By \Cref{thm1} we have that with probability at least $1-\delta$ we have that $\sup_{t\in\range{T}}\|\vlambda_t\|_1\le \frac{13m}{\rhostoc}$ and in the same way $\sup_{t\in\range{T},x\in\cX}\|\uP_t(x)\|_1\le 2\frac{13m}{\rhostoc}$.
	
	Define $\xi$ as the best strategy that satisfies the constraints, \ie $\optstoc:= T \ \E_{x\sim\xi}\left[\bar f(x)\right]$ and $\E_{x\sim\xi}[\bar g_{i}(x)]\le 0$.
	The no-regret property of $\AlgP$ with respect to $\xi$ gives that with probability $1-\delta$ it holds:
	\begin{align*}
	\sum\limits_{t\in\range{T}}[ f_t(x_t)-\langle\vlambda_t,\vg_t(x_t)\rangle]&\\
	&\hspace{-3cm}\ge \E_{x\sim\xi}\left[	\sum\limits_{t\in\range{T}}[ f_t(x)-\langle\vlambda_t,\vg_t(x)\rangle]\right]-\left(2\frac{13m}{\rhostoc}\right)^2\overline{\RP}_{T,\delta}(\cX)\\
	&\hspace{-3cm}\ge \E_{x\sim\xi}\left[	\sum\limits_{t\in\range{T}}[ \bar f(x)-\langle\vlambda_t,\bar \vg(x)\rangle]\right]-676\left(\frac{m}{\rhostoc}\right)^2\overline{\RP}_{T,\delta}(\cX)-2\left(\frac{13m}{\rhostoc}\right) E_{T,\delta}\\
	&\hspace{-3cm}=T \ \optstoc-676\left(\frac{m}{\rhostoc}\right)^2\overline{\RP}_{T,\delta}(\cX)-\frac{26m}{\rhostoc} E_{T,\delta},
	\end{align*}
	where the second inequality follows from \cref{lm:HoefEmpty} with $M\coloneqq\frac{13m}{\rhostoc}$.

	Moreover, the no-regret property of the dual regret minimizer $\AlgD$, with respect to $\vlambda^*=\vzero$, gives that:
	\[
	\sum_{t\in\range{T}}\langle\vlambda^*-\vlambda_t,\vg_t(x_t)\rangle\le\frac{1}{2}\eta_\OGD mT.
	\]
	Finally, we can combine everything from which follows that:
	\[
	\Rew\ge \optstoc-676\left(\frac{m}{\rhostoc}\right)^2\overline{\RP}_{T,\delta}(\cX)-\frac{26m}{\rhostoc} E_{T,\delta}-\frac{1}{2}\eta_\OGD mT.
	\]
	The proof is concluded by observing that $\eta_\OGD=\tilde O\left((m\overline{\RP}_{T,\delta}(\cX))^{-1}\right)$ and $E_{T,\delta}=\tilde O(\sqrt{T})$
\end{proof}
\section{Proofs omitted from \Cref{sec:applications}}

\lemmaerrL*
\begin{proof}
	Consider the following inequalities:
	\begin{align*}
		\err(\cO_\cL)&\coloneqq \sum\limits_{t\in\range{T}} \left(\hat \cL_t(z_t, a_t)-\bar \cL(z_t, a_t)\right)^2\\
		&\le 2\sum\limits_{t\in \range{T}} \left(\hat f_t(z_t,a_t)-\bar f(z_t,a_t)\right)^2+2\sum\limits_{t\in \range{T}} \left(\langle\vlambda_t,\hat \vg_t(z_t,a_t)\rangle-\langle\vlambda_t\bar \vg(z_t,a_t)\rangle\right)^2\tag{By AM-GM inequality: $2ab\le a^2+b^2$ for $a,b\ge0$.}\\
		&=2\cdot \err(\cO_f)+2\sum\limits_{t\in \range{T}} \left(\langle\vlambda_t,\hat \vg_t(z_t,a_t)-\bar \vg(z_t,a_t)\rangle\right)^2\\
		&\le2 \cdot \err(\cO_f)+2\sum\limits_{t\in \range{T}} \|\vlambda_t\|_1^2\cdot\|\hat \vg_t(z_t,a_t)-\bar \vg(z_t,a_t)\|_\infty^2\tag{$\langle a,b\rangle\le \|a\|_1\cdot\|b\|_\infty$}\\
		&\le2\cdot \err(\cO_f)+2\left(\sup\limits_{t\in\range{T}}\|\vlambda_t\|_1\right)^2\cdot\sum\limits_{t\in \range{T}} \|\hat \vg_t(z_t,a_t)-\bar \vg(z_t,a_t)\|_\infty^2\\
		&\le 2\cdot \err(\cO_f)+2\left(\sup\limits_{t\in\range{T}}\|\vlambda_t\|_1\right)^2\cdot\sum\limits_{t\in \range{T}}\sum\limits_{i\in\range{m}} (\hat g_{t,i}(z_t,a_t)-\bar g_i(z_t,a_t))^2\\
		&= 2\cdot \err(\cO_f)+2\left(\sup\limits_{t\in\range{T}}\|\vlambda_t\|_1\right)^2\cdot\sum\limits_{i\in\range{m}} \err(\cO_i)
	\end{align*}
	which concludes the proof. 
\end{proof}

\theoremadaptiveContextual*

\begin{proof}
	Consider any interval $I=\range{t_1, t_2}\subseteq\range{T}$. Since the prediction error at each time $t$ is positive, one trivially has that:
	\[
	\sum\limits_{t\in\range{t_1,t_2}}\left(\hat \cL_t(z_t,a_t)-\bar \cL(z_t,a_t)\right)^2\le \err(\cO_\cL).
	\]
	Then, applying \Cref{lem:La} we have that:
	\[
	\sum\limits_{t\in\range{t_1,t_2}}\left(\hat \cL_t(z_t,a_t)-\bar \cL(z_t,a_t)\right)^2\le2 \err(\cO_f)+2\sup\limits_{t\in\range{T}}\|\vlambda_t\|_1^2 \sum_{i\in\range{m}}\err(\cO_i).
	\]
	Moreover, by the assumption on the errors of the oracles it holds that:
	\begin{equation}\label{eq:lemFoster1}
	\sum\limits_{t\in\range{t_1,t_2}}\left(\hat \cL_t(z_t,a_t)-\bar \cL(z_t,a_t)\right)^2\le 2m(1+\sup\limits_{t\in\range{T}}\|\vlambda_t\|_1^2)\overline{\err}.
	\end{equation}
	
	Note that we could pretend that the algorithm starts at any time $t_1\in\range{T}$, and the same analysis of \cite[Theorem~1]{foster2020beyond} would hold, as their algorithm behavior does not depend on its past behavior. Hence, the following holds:
	\begin{align*}
	\RP_{\range{t_1,t_2}}(\Pi)&:=\sup_{\pi\in\Pi}\sum\limits_{t\in\range{t_1,t_2}} [\uP_t(\pi)-\uP_t(\pi_t)]\\
	&\coloneqq\sup_{\pi\in\Pi}\sum\limits_{t\in\range{t_1,t_2}} [\cL_t(\pi(z_t))-\cL_t(\pi_t(z_t))]\\
	&=\sup_{\pi\in\Pi}\sum\limits_{t\in\range{t_1,t_2}} [\cL_t(\pi(z_t))-\cL_t(a_t)]\\
	&\le \frac{\eta_\term{P}}{2}\err(\cO_\cL)+4\eta_\term{P}\log\left(\frac{2T^2}{\delta}\right)+2K\frac{T}{\eta_\term{P}}+\sqrt{2T\log\left(\frac{2T^2}{\delta}\right)}
	\end{align*}
	which holds with probability $1-\delta/(T^2)$. 
	
	Thus, by an union bound, and combining it with \Cref{eq:lemFoster1} we obtain that:
	\[
	\RP_{\range{t_1,t_2}}(\Pi)\le {\eta_\term{P}}m(1+\sup\limits_{t\in\range{T}}\|\vlambda_t\|_1^2)\overline{\err}+4\eta_\term{P}\log\left(\frac{2T^2}{\delta}\right)+2K\frac{T}{\eta_\term{P}}+\sqrt{2T\log\left(\frac{2T^2}{\delta}\right)},
	\]
	which holds with probability $1-\delta/T^2$. Finally, by tuning $\eta_\term{P}=\sqrt{KT}$ and applying an union bound on all the $T^2$ possible intervals $\range{t_1,t_2}$, we obtain that with probability $1-\delta$ it holds that:
	\[
	\sup\limits_{I=\range{t_1,t_2}}\RP_{\range{t_1,t_2}}(\Pi)\le 504\cdot m\  \overline{\err} \ L^2 \log(T^2/\delta) \sqrt{KT}.
	\]
\end{proof}

\end{document}